\newtheorem{definition}{Definition} 
\newtheorem{theorem}{Theorem}
\newtheorem{assumption}{Assumption}
\newtheorem{lemma}{Lemma}
\newtheorem{proposition}{Proposition}
\begin{document}

\title{Mode Connectivity and Data Heterogeneity of Federated Learning}

\author{Tailin~Zhou, \IEEEmembership{Graduate Student Member,~IEEE,}
        Jun~Zhang,~\IEEEmembership{Fellow,~IEEE,}
        and~Danny~H.K.~Tsang,~\IEEEmembership{Life~Fellow,~IEEE}
\thanks{
 T. Zhou is with IPO,  Academy of Interdisciplinary Studies, The Hong Kong University of Science and Technology, Clear Water Bay, Hong Kong SAR, China (Email: tzhouaq@connect.ust.hk).
J. Zhang is with the Department of Electronic and Computer Engineering, The Hong Kong University of Science and Technology, Clear Water Bay, Hong Kong SAR, China  (E-mail: eejzhang@ust.hk).
D. H.K. Tsang is with the Internet of Things Thrust, The Hong Kong University of Science and Technology (Guangzhou), Guangzhou, Guangdong, China, and also with the Department of Electronic and Computer Engineering, The Hong Kong University of Science and Technology, Clear Water Bay, Hong Kong SAR, China (Email: eetsang@ust.hk). (The corresponding author is J. Zhang.)
}
}



\maketitle

\begin{abstract}
Federated learning (FL)  enables multiple clients to train a model while keeping their data private collaboratively.
Previous studies have shown that data heterogeneity between clients leads to drifts across client updates.
However, there are few studies on the relationship between client and global modes,  making it unclear where these updates end up drifting.
We perform empirical and theoretical studies on this relationship by utilizing mode connectivity, which measures performance change (i.e., connectivity) along parametric paths between different modes.
Empirically,   reducing data heterogeneity makes the connectivity on different paths more similar, forming more low-error overlaps between client and global modes.
We also find that a barrier to connectivity occurs when linearly connecting two global modes, while it disappears with considering non-linear mode connectivity.
Theoretically,   we establish a quantitative bound on the global-mode connectivity using mean-field theory or dropout stability.  
The bound demonstrates that the connectivity improves when reducing data heterogeneity and widening trained models.
Numerical results further corroborate our analytical findings. 
\end{abstract}

\begin{IEEEkeywords}
Federated learning, mode connectivity, data heterogeneity, mean-field theory, loss landscape visualization.
\end{IEEEkeywords}

\section{Introduction}
\IEEEPARstart{F}{ederated} Learning (FL) \cite{mcmahan17FL} is a collaborative paradigm that enables multiple clients to train a model while preserving data privacy.
 A primary challenge in FL is data heterogeneity across clients \cite{kairouz2021advances}. 
Previous works like \cite{zhao2018federated} found that data heterogeneity induces a misalignment between clients' local objectives and the FL's global objective (hereinafter referred to as client objective and FL objective, respectively) and drifts client updates.
Nevertheless, despite client-update drift, the FL objective can typically be optimized \cite{shao2023survey} effectively.
This implies that the destinations of these drifting updates (i.e., the solutions to the client and FL objectives)  may overlap.
However, a systematic investigation of this relationship is currently lacking.

A straightforward method to examine this overlapping relationship is to investigate mode connectivity \cite{Garipov2018Loss_Surfaces}, where mode refers to a suit of solutions to a specific objective,  i.e., a suit of neuron parameters, under permutation invariance.
Mode connectivity measures the connectivity of two modes via a parametric path and reveals the geometric relationship in the solution space.
Specifically, it evaluates the performance change (e.g., taking loss/error as metrics)  along a given path with endpoints corresponding to two different modes.
Maintaining a minor change along the path indicates better connectivity (i.e., the geometric landscape becomes increasingly connected between the two modes).
Some works \cite{Garipov2018Loss_Surfaces, Draxler2018No_Barriers} have discovered that two modes can be connected well in centralized training, but research on FL remains scarce.
 
 \begin{figure}[t]
     \centering
     \includegraphics[width=0.4\textwidth]{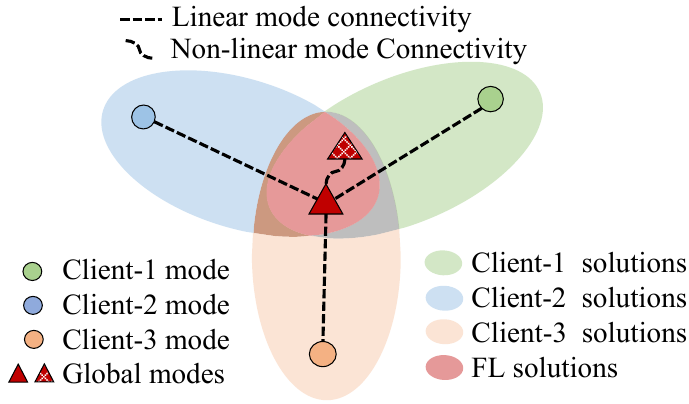}
     \caption{An illustration to show the relationship of client modes and global modes of FL in the solution space.}
     \label{fig:toy_example_mode_space}
 \end{figure}
 
In this work, we conduct empirical and theoretical studies on mode connectivity in FL under data heterogeneity to unravel the relationship between client modes (i.e., the solutions to client objectives) and global modes (i.e., the solutions to the FL objective).
Empirically, we consider linear and non-linear paths to evaluate the connectivity between client modes and global modes to show their overlap relationships, as illustrated in Figure \ref{fig:toy_example_mode_space}.
We also investigate the connectivity of global modes obtained under varying levels of data heterogeneity.
Theoretically,  we leverage mean-field theory \cite{SongMei2018meanfield} and dropout stability to establish a quantitative bound on the connectivity errors of global modes. 
In addition, we conduct a comprehensive numerical analysis to verify our analytical findings.

This study takes the first step to explore mode connectivity in FL.
Our main contributions are summarized as follows:
\begin{itemize}
    \item Our empirical and theoretical investigations demonstrate that data heterogeneity deteriorates the connectivity between client and global modes, making it challenging  to find solutions that meet both client and FL objectives.

    \item It is revealed that FL solutions from different data heterogeneity belong to distinct global modes within a common region of the solution space.
    They can be connected on a polygonal chain while keeping error low but face an error barrier when considering a linear path. 
    
    \item We establish quantitative bounds on dropout and connectivity errors of global modes using mean-field theory, showing that both errors decrease with reduced data heterogeneity and widening trained models.
\end{itemize}


\section{Related Works}

 \textbf{Data heterogeneity in FL.}
According to \cite{zhao2018federated}, data heterogeneity across clients significantly degrades the FL performance. Subsequent works have observed that data heterogeneity induces weight divergence and feature inconsistency.
For weight divergence, studies such as \cite{li2020federated,wang2020tackling} show that client-update drift slows the FL convergence due to inconsistent client and FL objectives.
Moreover, some studies,  e.g., \cite{luo2021no,zhou2022fedfa}, discover that the classifier divergence across clients is the main culprit of feature inconsistency, negatively affecting FL generalization.
In contrast to previous works, we offer a novel perspective on data heterogeneity's impact within the FL solution space.

 \textbf{Mode connectivity.}
The connectivity between neural network (NN) solutions due to the NN non-convexity, called mode connectivity, provides a new view into the geometric landscape of NNs in recent years.
Despite the high dimension of NNs, the different solutions follow a clear connectivity pattern \cite{FreemanB17, garipov2018loss,Draxler2018No_Barriers}.
Specifically, it is possible to connect the local minima through specific paths that make it easier to find high-performance solutions when using gradient-based optimizers.
The paths are often non-linear curves and are discovered by specific tasks. 
This suggests that the local minima are not isolated but rather interconnected within a manifold.
Moreover,  the geometric landscape of an NN becomes approximately linear-connected when considering the infinite neuron numbers as per \cite{FreemanB17}.
Previous studies mainly focus on mode connectivity in centralized training with homogeneous data.
Several pilot studies \cite{li2022understanding,zhou2023understanding} visualize the loss landscape of FL  while not mode connectivity, and then our work tries to fill this gap of mode connectivity in  FL under data heterogeneity.


\textbf{Mean-field theory.}
A  mean-field theory for the NN dynamics analysis is proposed by \cite{SongMei2018meanfield,SongMei2019meanfield} based on the gradient-flow approximation.
The theory shows that when the neuron number is sufficiently large, the dynamics of stochastic gradient descent (SGD) to optimize NNs can be well approximated by a Wasserstein gradient flow.
This approximation has been studied both in the two-layer case \cite{SongMei2018meanfield,rotskoff2018neural} and the multi-layer case \cite{nguyen2020rigorous,shevchenko20aLandscape}.
In particular,  the approximation is utilized to analyze the dropout stability of SGD solutions by \cite{shevchenko20aLandscape}, and the dropout stability implies mode connectivity of NNs according to  \cite{Kuditipudi2019Explaining_Landscape}. 
However, these results are solely observed in centralized training.
To the best of our knowledge, this paper is the first to expand upon the theoretical results into FL.


\section{Preliminaries}
 
  \begin{figure*}[t]
    \centering
    \includegraphics[width=\textwidth]{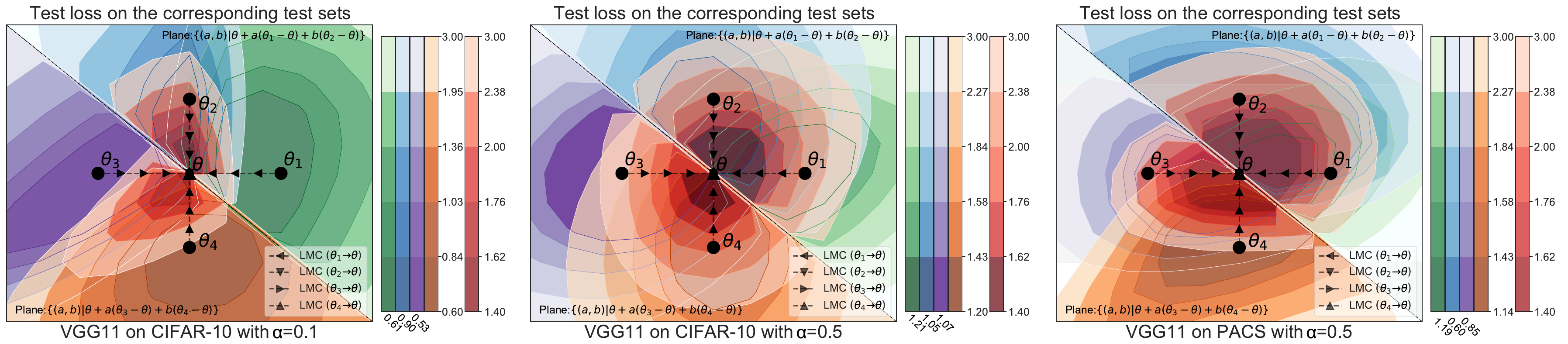}
    \caption{Loss landscapes for the global mode and client modes on their corresponding test sets at the 200-th round (final round), and LMC  from client modes to the global mode. 
      Loss landscape on  VGG11 on CIFAR-10 with  $\alpha=0.1$  ({\textit{left}}), CIFAR-10 with 
 $\alpha=0.5$ ({\textit{middle}}) and on PACS with $\alpha=0.5$ ({\textit{right}}).}
    \label{fig:VGG11_connectivity_cifar10}
\end{figure*}

 \textbf{Mode connectivity.}
The forward function of an NN parameterized by $\bm{\theta} \in \Theta$ is represented as  $f_{\bm{\theta}}: \mathcal{X} \rightarrow \mathcal{Y}$, where $\Theta$ denotes the parameter space of an NN architecture, and $\mathcal{X} \in \mathbb{R}^d$ and $\mathcal{Y} \in \mathbb{R}$ are input and output spaces, respectively. 
When models ${\bm{\theta}_1}$ and ${\bm{\theta}_2}$ have the same neurons but different neuron permutations, they belong to the same mode. 
According to \cite{Kuditipudi2019Explaining_Landscape,shevchenko20aLandscape},  given an NN architecture $\Theta$, a dataset $\mathcal{D}$ and a loss function $\mathcal{L}(\cdot)$, mode connectivity is defined as follows:
 \begin{definition} \label{def:model_connectivity} 
     (Mode connectivity). Two mode parameters $\bm{\theta}_1 \in \Theta$ and $\bm{\theta}_2 \in \Theta$  are $\varepsilon_C$-connected  if there exists a continuous path in parameter space $\pi_{\bm{\theta}}(\nu):[0,1]\rightarrow \Theta $, such that $\pi_{\bm{\theta}}(0)=\bm{\theta}_1$ and $\pi_{\bm{\theta}}(1)=\bm{\theta}_2$ with $\mathcal{L}(\pi_{\bm{\theta}}(\nu))\leq \max(\mathcal{L}(\bm{\theta}_1),\mathcal{L}(\bm{\theta}_2)) + \varepsilon_C$, where $\varepsilon_C$ is the connectivity error.
 \end{definition}
This definition measures the connectivity of two modes by comparing the loss change along a path to its endpoints.
 Moreover,  we consider a path-finding task proposed by \cite{Garipov2018Loss_Surfaces}, which minimizes the expected loss over a uniform distribution on the path as:
 \begin{equation}\label{curve_finding_loss}
  \min   \mathcal{L}_{\rm curve}(\bm{\theta}) = \mathbb{E}_{\nu\sim U(0,1)}\left[\mathcal{L}(\pi_{\bm{\theta}}(\nu))\right],
 \end{equation}
where the path $\pi_{\bm{\theta}}(\nu)$ can be characterized by a Polygonal chain (PolyChain) with one or more bends \cite{Garipov2018Loss_Surfaces}. 
We consider a  PolyChain  case   with one bend as:
\begin{equation}\label{equation:chain_one_bend}
    \pi_{\bm{\theta}}(\nu) =
    \left\{
             \begin{array}{lr}
             2\left[(0.5-\nu)\bm{\theta}_1+ \nu \bm{\theta}_{\rm b}\right], & 0\leq\nu\leq0.5, \\
             2\left[ (\nu -0.5)\bm{\theta}_{\rm b}+(1-\nu)\bm{\theta}_2 \right], &  0.5\leq\nu\leq1,
             \end{array}
\right.
\end{equation}
where $\bm{\theta}_{\rm b}$ is the model of the PolyChain-bend point and can be found by $\min_{\bm{\theta}_{\rm b}}   \mathcal{L}_{\rm curve}(\bm{\theta}_{\rm b})$ in (\ref{curve_finding_loss}). 
We refer to mode connectivity as linear (or non-linear) mode connectivity (LMC) when using linear interpolation (or PolyChain).
Compared with the non-linear counterpart,  LMC is a stronger constraint since LMC requires the two modes to stay in the same basin.

\textbf{Federated leaning (FL).} 
We consider an FL framework with $K$ clients,   each with its own dataset $\mathcal{D}_k   = \{ ({\bm x}_{i_k}, y_{i_k})  \}$, where data samples $ ({\bm x}_{i_k}, y_{i_k}) \sim \mathbb{P}_k:\mathbb{R}^d \times  \mathbb{R} $    are indexed by $i_k \in [n_k]$.
The global dataset is the union of all client datasets and denoted by $\mathcal{D} = \cup_{k=1}^K \mathcal{D}_k \sim \mathbb{P}$ on $ \mathbb{R}^d \times  \mathbb{R}$, which includes  $n = \sum_{k=1}^K n_k$ data samples.
The FL  objective is to minimize the expected global loss $  \mathcal{L}(\bm{\theta}) $   on  $\mathcal{D}$, and is formulated as: 
 \begin{equation} \label{objective:FL}
 \begin{aligned}
 \min_{\bm{\theta}} \mathcal{L}(\bm{\theta}) & =  \mathbb{E}_{({\bm x}, y) \in \mathcal{D}}[  l (\bm{\theta};({\bm x}, y))] \\
  &  =  \sum_{k=1}^K  \frac{n_k}{n} \mathbb{E}_{({\bm x}, y) \in \mathcal{D}_k}[  l_k (\bm{\theta};({\bm x}, y))], 
 \end{aligned}
 \end{equation}
  where $l(\cdot)$ and $l_k(\cdot)$ denote  the global and client loss functions,  respectively.
 The $k$-th client objective is  $\min_{\bm{\theta}_k} \mathcal{L}_k(\bm{\theta}_k)=  \mathbb{E}_{({\bm x}, y) \in \mathcal{D}_k}[  l_k (\bm{\theta}_k;({\bm x}, y))]$.
When $\mathbb{P}_k\neq \mathbb{P}$, FedAvg   \cite{mcmahan17FL}  takes client models $\{\boldsymbol{\theta}_k\}_{k=1}^K$ to solve (\ref{objective:FL}) by minimizing client objectives locally and  obtaining the global model $\boldsymbol{\theta} =\sum_{k=1}^{K} \frac{n_k}{n}\boldsymbol{\theta}_k$  round by round.

\textbf{Experimental setups.}
We explore mode connectivity in FL on classification tasks, including MNIST \cite{lecun1998gradient}, CIFAR-10 \cite{krizhevsky2009learning},  and PACS \cite{li2017deeper}.
The experimental setups used throughout this paper are as follows, unless stated otherwise.
For NN architectures, we consider a two-layer NN, a shallow convolutional neural network (CNN) with two convolutional layers and a deep CNN, i.e., VGG architecture \cite{simonyan2014very}.
For the FL setup,    we consider ten clients participating in FL with a total of 200 rounds.
Client optimizers are SGD with a learning rate of 0.01 and momentum of 0.9, mini-batches of size   50, the local epoch of 1, and dropout of 0.5 for training VGG.
For data heterogeneity,  we follow \cite{kairouz2021advances} and examine label distribution skew by using Dirichlet distribution $Dir(\alpha)$ \cite{yurochkin2019bayesian} to create client label datasets.
A larger $\alpha$ suggests more homogenous distributions across clients. 
On this basis, we also make use of the built-in domain shift of PACS to introduce feature distribution skew across clients.

\section{Mode Connectivity of Client Modes}

In this section, we explore mode connectivity among client modes trained on client datasets, where $\mathbb{P}_k \neq \mathbb{P}_{k^\prime}$ when $k \neq k^\prime$.
We first investigate the geometric landscape of client modes on their corresponding test sets.
Specifically,  we create a hyperplane by using the global mode $\bm{\theta}$ and client modes $\bm{\theta}_k$, $\bm{\theta}_{k^\prime}$, which is represented as $\{(a,b)| \bm{\theta} + a(\bm{\theta}_k -\bm{\theta} ) + b(\bm{\theta}_{k^\prime} - \bm{\theta} )\}$. 
We then use the test sets of clients $k$ and $k^\prime$ to evaluate the performance of all points on the hyperplane.
      Figure \ref{fig:VGG11_connectivity_cifar10}  displays  the loss landscapes obtained on different test sets and distinguishes them by different colors.
We also depict the loss landscape of the global mode (i.e., the initialization of client modes) on the global test set in Figure \ref{fig:VGG11_connectivity_cifar10}.
For comparison, we set the legend scale of the client-mode landscape to be the same in each sub-figure of Figure \ref{fig:VGG11_connectivity_cifar10} except the lowest value, and keep the scale of the global-mode landscape consistent among all the sub-figures.

 \begin{figure}[t]
	\centering  
	\subfigbottomskip=2pt 
	\subfigcapskip=0pt 
 	\subfigure{
		\includegraphics[width=0.46\linewidth,height=90pt]{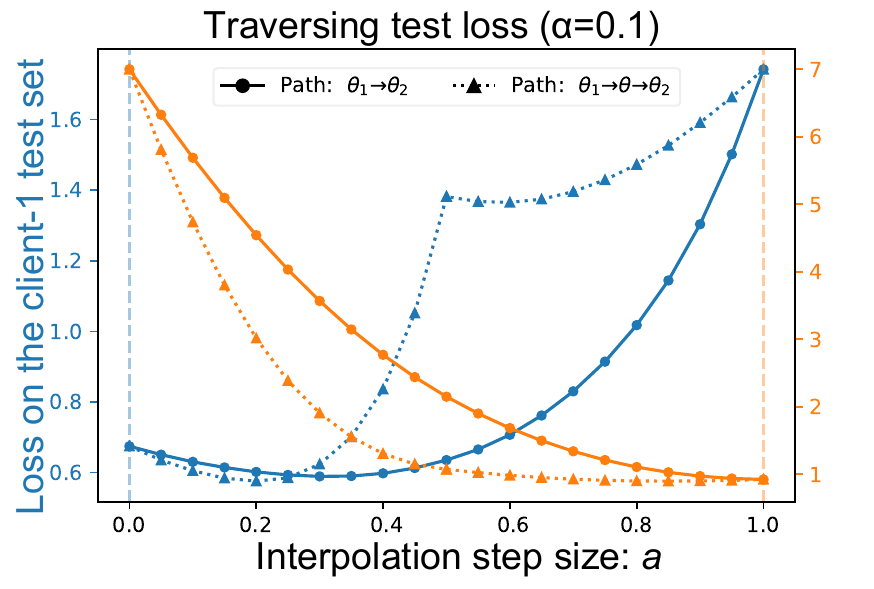}
  \label{Traversing_test_loss_VGG11_E1_a}}
	\subfigure{
		\includegraphics[width=0.46\linewidth,height=90pt]{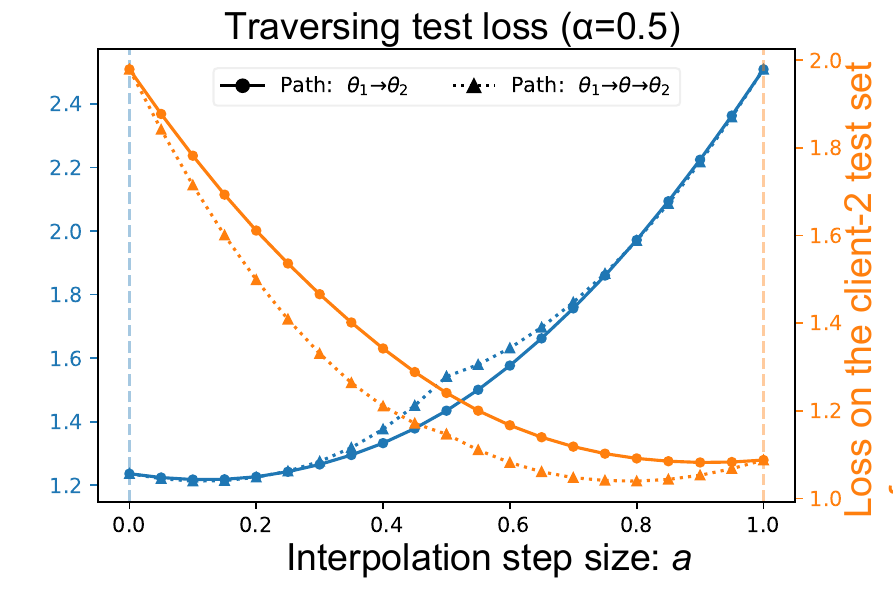}
    \label{Traversing_test_loss_VGG11_E1_b}}
    \caption{Traversing loss  under  $\alpha=0.1$ (\textit{left}) and $\alpha=0.5$ (\textit{right})  from client 1 to client 2.}
  \label{fig:Traversing_test_loss_VGG11_E1}
\end{figure}

\begin{figure}[t]
	\centering  
	\subfigbottomskip=2pt 
	\subfigcapskip=0pt 
 	\subfigure{
		\includegraphics[width=0.45\linewidth, height=80pt]{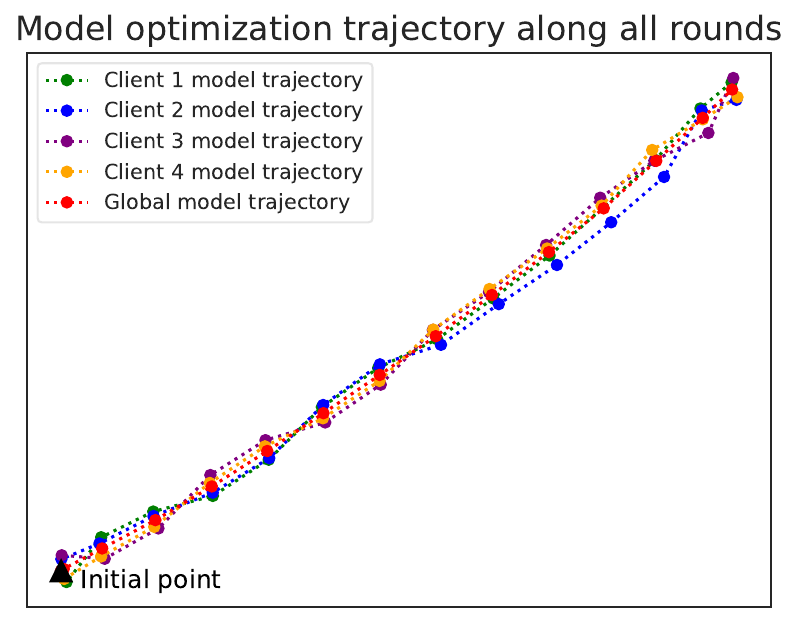}
  \label{optimization_trajectory_VGG11_E1_a}}
	\subfigure{
		\includegraphics[width=0.45\linewidth, height=80pt]{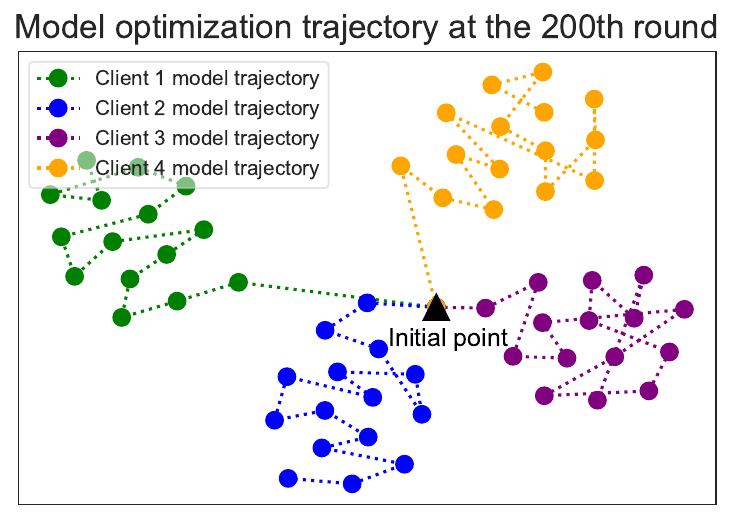}
    \label{optimization_trajectory_VGG11_E1_b}}
    \caption{Client models' optimization trajectory (VGG11) along the training rounds (\textit{left}) and along the local iterations within one round (\textit{right}).}
  \label{fig:optimization_trajectory_VGG11_E1}
\end{figure}

\begin{figure*}[t]
	\centering  
	\subfigbottomskip=0pt 
	\subfigcapskip=0pt 
 	\subfigure{
		\includegraphics[width=0.46\linewidth,height=90pt]{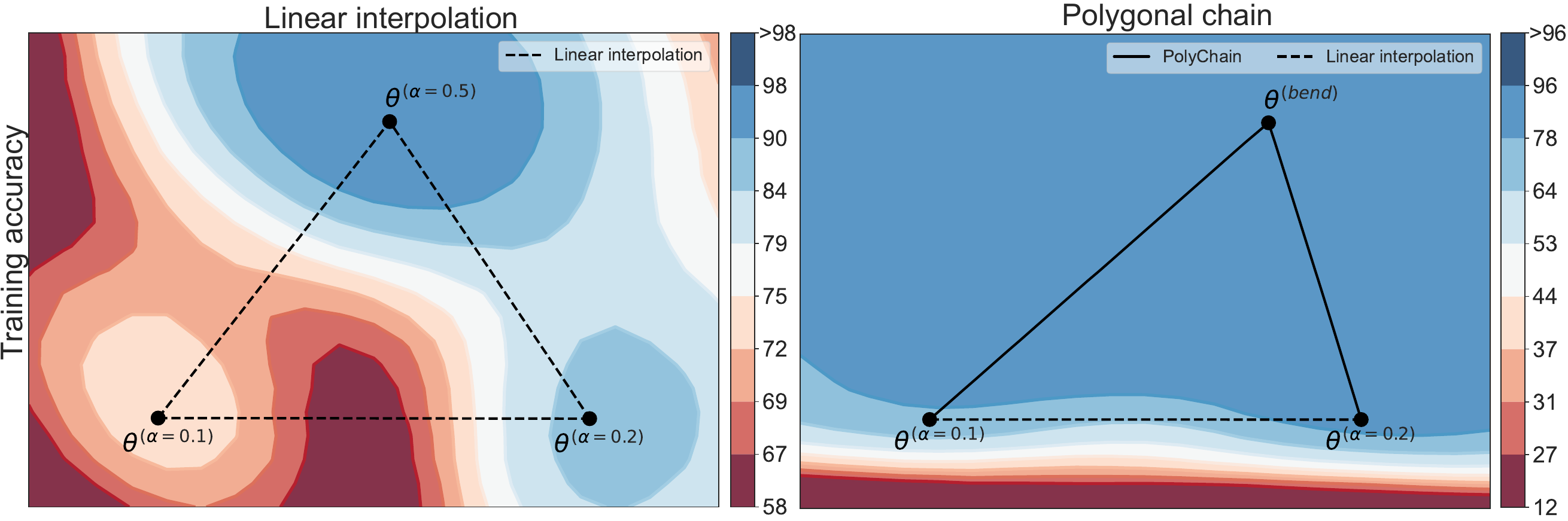}
  \label{fig:global_model_acc_mode_connecitvity_a}}
	\subfigure{
		\includegraphics[width=0.46\linewidth,height=90pt]{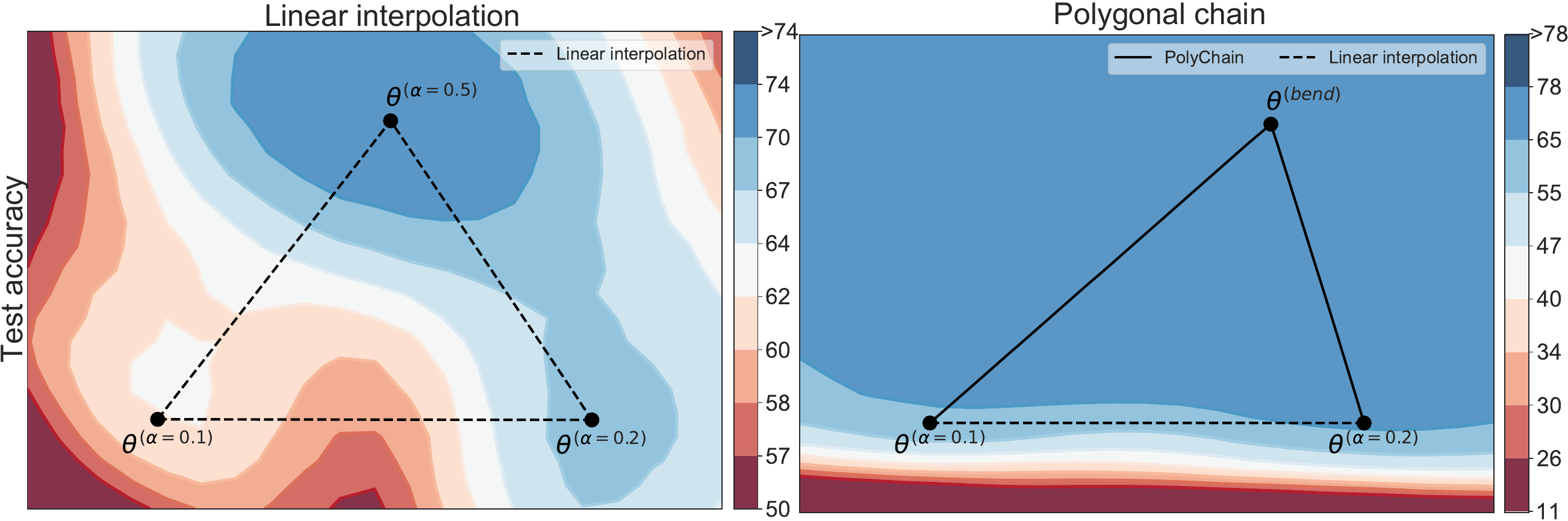}
    \label{fig:global_model_acc_mode_connecitvity_b}}
    \caption{Training accuracy (\textit{{Left}}) and test accuracy  (\textit{{Right}})  with linear connectivity and Polygonal connectivity on global modes.}
  \label{fig:global_model_acc_mode_connecitvity}
\end{figure*}

\textbf{Visualizing mode connectivity on client modes.} 
As shown in Figure \ref{fig:VGG11_connectivity_cifar10}, the comparison between the cases of $\alpha=0.1$ and $\alpha=0.5$ indicates that \textit{when data heterogeneity increases,   low-loss overlaps between client and global modes decrease, and the  number of overlapping solutions to both client and FL objectives reduces}.
Similar results are also observed by comparing the cases of CIFAR-10 and PACS.
Meanwhile, although client modes achieve a lower test loss than the global mode, they stray from the low-loss landscapes of the global mode.
This indicates that client modes are prone to overfitting their own objectives when FedAvg solves (\ref{objective:FL}) even if the global mode provides adequate initialization for them in the final round.
On the other hand, we can linearly connect client models to the global model in all cases, as shown in the LMC of Figure \ref{fig:VGG11_connectivity_cifar10}.
As the loss changes along these linear paths, it may eventually reach the low-loss landscapes of the global mode while still remaining in the low-loss landscapes of client modes.
The above observations reveal that there exist some solutions that meet both client and FL objectives, but they may be overlooked by FedAvg.

\textbf{Traversing loss along the paths connecting client modes.} 
 To delve deeper into the loss landscapes of Figure \ref{fig:VGG11_connectivity_cifar10},  we consider two paths to connect two client modes.
 One path is a linear connection between them (i.e., $\bm{\theta}_k \rightarrow \bm{\theta}_k^{\prime}$), while the other is a PolyChain with the global mode as its bend (i.e., $\bm{\theta}_k \rightarrow \bm{\theta}  \rightarrow \bm{\theta}_k^{\prime}$) as per (\ref{equation:chain_one_bend}).
We follow the paths to traverse the landscape from $\bm{\theta}_k$  to $\bm{\theta}_k^{\prime}$ on their test sets, and show the traversing losses along the paths in Figure \ref{fig:Traversing_test_loss_VGG11_E1}.
For the PolyChain in the case of  $\alpha=0.1$, the intersection point of the two curves deviates from the global mode (i.e., $a=0.5$), where the intersection represents a solution that works effectively on both client datasets.
Moreover,  the traversing-loss gap between the two paths in the case of $\alpha=0.1$ is greater than that of  $\alpha=0.5$.
This means that \textit{when heterogeneous data exist at clients, it becomes more difficult to find solutions that align with both the client and FL objectives}.

\textbf{Client-mode trajectory.}
The trajectory of client modes during optimization in the case of $\alpha=0.1$ is depicted in Figure \ref{fig:optimization_trajectory_VGG11_E1}.
This trajectory is based on t-SNE \cite{van2008visualizing} and covers both training rounds and local iterations within each round.
The figure illustrates that throughout the training process, the distance between client modes is controlled even under serious data heterogeneity, such that they closely surround the global mode.
During the same round, data heterogeneity pushes client modes with the same initialization to be optimized locally along different directions.
Therefore, \textit{the solutions found by client objectives vary when facing data heterogeneity even though they are close to each other in the parameter space}.
 Figure \ref{fig:optimization_trajectory_CNN_E5} shows that the shallow CNN has similar results.


\section{Mode Connectivity of Global Modes}
In this section, we explore the mode connectivity of global modes obtained by different data heterogeneity of the same training task.
We consider two paths, linear interpolation and  PolyChain, to investigate the mode connectivity among global modes on CIFAR-10 under $\alpha=0.1, 0.2$ and $0.5$, as shown in Figure \ref{fig:global_model_acc_mode_connecitvity}.
Note that the initialization of global modes remains the same across these three cases.

\textbf{Visualizing mode connectivity on global modes.} 
In Figure \ref{fig:global_model_acc_mode_connecitvity}, the linear-interpolation landscape reveals that global modes obtained from varying data heterogeneity are situated in distinct basins with an error barrier separating them.
The landscape also shows that when sharing the same initialization, the global-mode basins are within a common region surrounded by high errors.
With more heterogeneous data,  like those involving  $\alpha=0.1, 0.2$, both training and test error barriers become higher, compared with that of $\alpha=0.1, 0.5$.
Furthermore,  we also use a PolyChain found by (\ref{curve_finding_loss}) to connect the global modes of $\alpha=0.1, 0.2$.
Along the PolyChain,  all the solutions keep no barrier in both training and test landscapes.
The above results indicate that \textit{global modes can be connected without any barrier by some paths and they are situated within a manifold}, as illustrated in Figure \ref{fig:toy_example_mode_space}.

\begin{figure}[t]
	\centering  
	\subfigbottomskip=2pt 
	\subfigcapskip=0pt 
 	\subfigure{
		\includegraphics[width=0.46\linewidth,height=90pt]{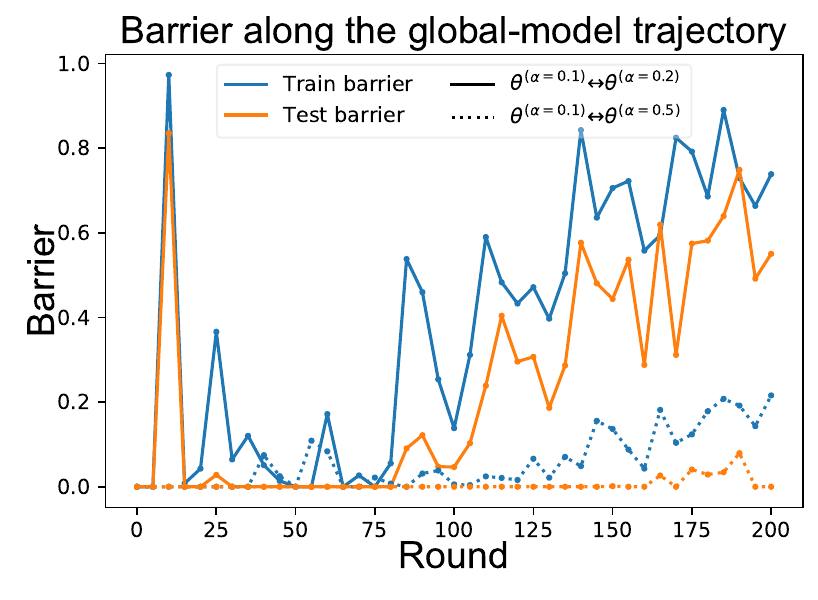}
  \label{fig:barrier_a}}
	\subfigure{
		\includegraphics[width=0.46\linewidth,height=90pt]{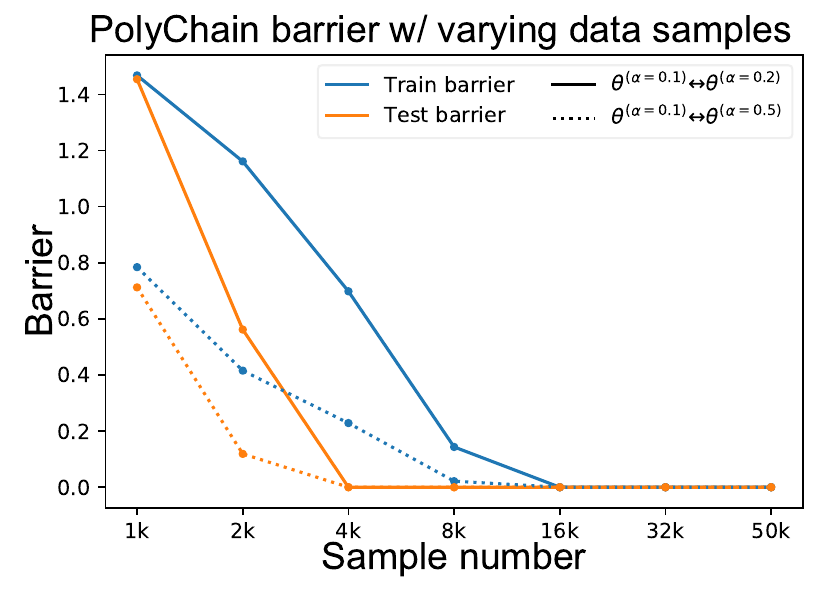}
    \label{fig:barrier_b}}
    \caption{Global-mode barrier with linear interpolation (\textit{left}) and PolyChain (\textit{right}).}
  \label{fig:barrier}
\end{figure}

\textbf{Barrier among global modes.}  
To further quantify the mode connectivity of two global modes ${\bm {\theta}}$ and ${\bm {\theta}^\prime}$, we  define a barrier metric as:
\begin{equation}\label{equation:barrier}
    B({\bm {\theta}},{\bm {\theta}^\prime}) = 
    \max_{a\in[0,1]} \frac{ \left|\mathcal{L} \left(\pi_{\bm{\theta}}(a) \right) -\min(\mathcal{L} ({\bm { \theta}}), \mathcal{L} ({\bm {\theta}}^\prime)) \right|}{ |\mathcal{L} ({\bm { \theta}}) - \mathcal{L} ({\bm {\theta}}^\prime)|} - 1
\end{equation}
where $\pi_{\bm{\theta}}(a)$ is equal to $(1-a) {\bm {\theta}} + a  {\bm{\theta} }^\prime$ and   (\ref{equation:chain_one_bend}) when considering linear interpolation and PolyChain, respectively.
Here,   $B({\bm {\theta}},{\bm {\theta}^\prime})$ measures the largest performance gap along $\pi_{\bm{\theta}}(a)$ comparing with its endpoints ${\bm {\theta}}$ and ${\bm {\theta}^\prime}$.
We say that ${\bm {\theta}}$ and ${\bm {\theta}^\prime}$ are well-connected if the barrier $B({\bm {\theta}},{\bm {\theta}^\prime})$ is close to $0$.

We then use $B({\bm {\theta}},{\bm {\theta}^\prime})$ to explore mode connectivity during the whole training round. 
In Figure \ref{fig:barrier}, the sub-figure on the left shows that more severe data heterogeneity causes greater barriers by comparing the cases of  $\alpha=0.1, 0.2$ and $\alpha=0.1, 0.5$, which persist throughout training.
Meanwhile, as the rounds progress, the barrier shows an increasing trend after an initial oscillation. 
This causes various global modes to reach their own basins, as found in Figure \ref{fig:global_model_acc_mode_connecitvity}.
We also examine the difficulty of finding the global-mode PolyChain when facing varying data heterogeneity.  
Here, the difficulty is quantified by the relationship between barrier $B({\bm {\theta}},{\bm {\theta}^\prime})$ and data sample to optimize  (\ref{curve_finding_loss}) and is depicted in the right sub-figure of Figure \ref{fig:barrier}.
The sub-figure illustrates that more samples are needed to find the specific PolyChain as data heterogeneity increases, resulting in more incredible difficulty in connecting global modes with no error barrier.

\textbf{Function dissimilarity and distance of global modes.}
According to \cite{entezari2022the}, permutation invariance may contribute to a  linear-interpolation barrier between NN solutions in centralized training.
When considering FL,   data heterogeneity may disturb the permutation of FL solutions, leading to the barriers found in Figure \ref{fig:global_model_acc_mode_connecitvity}.
To verify whether the FL solutions obtained from different data heterogeneity belong to the same global mode,   we measure their function dissimilarity on the left-hand side of Figure \ref{fig:weight_distance_function_similarity}.
The figure shows that function dissimilarity remains $>0.2$ among these FL solutions even when the data heterogeneity is weakened.
This means that the FL solutions belong to different global modes.
Furthermore, Figure \ref{fig:weight_distance_function_similarity} presents that the distance between these FL solutions is smaller than the distance between them and their initial points.
This means that different global modes with the same initialization are close to each other.
Combined with the above results, it can be inferred that \textit{these FL solutions belonging to various global modes are adjacent and in a common region, but they cannot be linearly connected}.

\begin{figure}[t]
	\centering  
	\subfigbottomskip=2pt 
	\subfigcapskip=0pt 
 	\subfigure{
		\includegraphics[width=0.46\linewidth,height=90pt]{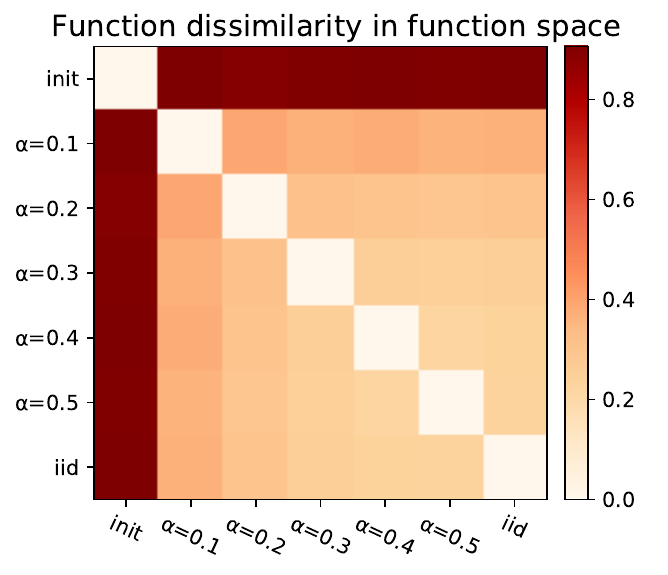}
  \label{fig:weight_distance_function_similarity_a}}
	\subfigure{
		\includegraphics[width=0.46\linewidth,height=90pt]{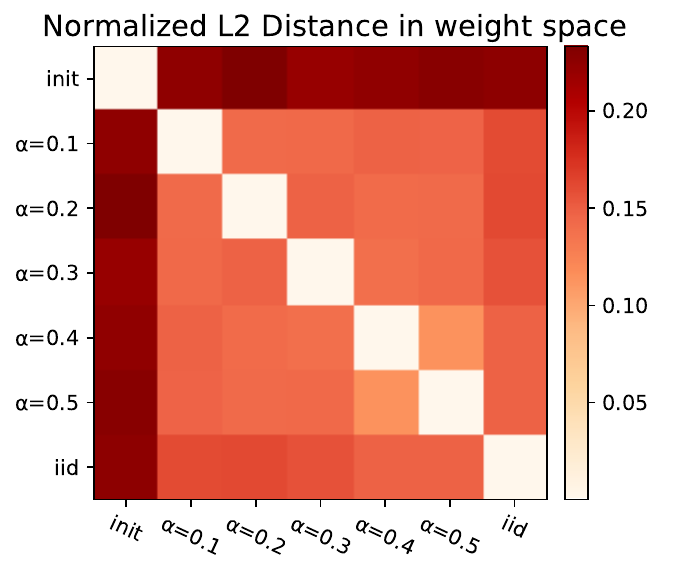}
    \label{fig:weight_distance_function_similarity_b}}
    \caption{Global modes in function space (\textit{left}) and   weight space (\textit{right}). Function dissimilarity is computed by the fraction of labels on which the predictions from different global modes disagree when given the same inputs.  L2 distance is normalized by the norm of model initialization.  }
  \label{fig:weight_distance_function_similarity}
\end{figure}

\section{Mode Connectivity Analysis under Data Heterogeneity}
  As indicated by our experimental results, a  sufficient condition to find the solutions meeting both client and FL objectives (\ref{objective:FL}) is that the distributions represented by client modes $\boldsymbol{\theta}_k$ are the same as the global mode $\boldsymbol{\theta}$.
 However, satisfying this condition is generally challenging since data heterogeneity induces client-gradient drifts \cite{zhao2018federated} and then implicit output bias among client modes  \cite{zhou2023understanding}.
Building upon the drifts and bias,  we consider  $K$ client modes $\{\boldsymbol{\theta}_k\}_{k=1}^K$  and formulate  data heterogeneity as:
 \begin{definition} \label{def:data_heterogeneity}
     (Data heterogeneity). Given  a global mode   ${\boldsymbol{\theta}}=\sum_{k=1}^{K} \frac{n_k}{n}\boldsymbol{\theta}_k$  and data samples $({\bm x}, y) \sim \mathbb{P}$,  for $ \forall k \in [K]$, the global-mode output   $\Bar{y}  = f_{{\boldsymbol{\theta}} }(\boldsymbol{x} ) $ and its gradient $\nabla_{{\boldsymbol{\theta}} }(\boldsymbol{x} )$ differ from the client-mode output $ \hat{y}_{k} = f_{{\boldsymbol{\theta}}_{k}}(\boldsymbol{x})$ and its gradient $\nabla_{{\boldsymbol{\theta}}_{k}}(\boldsymbol{x})$, respectively,  if the client distribution   $\mathbb{P}_k$ differs from the global distribution $\mathbb{P}$.
 \end{definition}
 In this section,  we will utilize dropout stability to analyze mode connectivity of global modes.
First, let us  define it as:
  \begin{definition} \label{def:dropout_stability}
     (Dropout stability).  Given   a model $\boldsymbol{\theta}$ with $N$ neurons and its dropout networks   $\boldsymbol{\theta}_{\mathrm{S}}$ with $\mathcal{A} \subseteq[N]$, the model $\boldsymbol{\theta}$ is $\varepsilon_{\mathrm{D}}$-dropout stable if
$\left|\mathcal{L}_N(\boldsymbol{\theta})-\mathcal{L}_{|\mathcal{A}|}\left(\boldsymbol{\theta}_{\mathrm{S}}\right)\right| \leq \varepsilon_{\mathrm{D}}$.
 \end{definition}
 
Then, we adopt the mean-field theory developed by \cite{SongMei2018meanfield} to analyze dropout stability under data heterogeneity.
Mean-field theory shows that the trajectory of  SGD   can be approximated by a partial differential equation (PDE) called distributional dynamic (DD).
Namely, under the assumption of enough neurons (i.e., $N$ is large enough) and one-pass data processing (i.e., for all $i \in [n] $, samples $({\bm x}_{i}, y_{i})  \in \mathcal{D} \sim \mathbb{P} $ are independent and identically distributed),  the SGD trajectory of  $N$ neurons are close to the movement of $N$ i.i.d particles following the description of DD.
Here,  we consider a two-layer NN with $N$ hidden  neurons, denoted as  ${\bm{\theta}}=({\bm{\theta}_1,\dots,{\bm{\theta}_N}})$, where  ${\bm{\theta}_i} \in \mathbb{R}^D$ for all $i\in[N]$.
The  forward function of ${\bm{\theta}}$ is represented as:
\begin{equation}\label{equation:twolayer_network_function}
    f_{\bm{\theta}}({\bm x})=1/N \sum_{i=1}^N \sigma_*({\bm x};{\bm{\theta}_i}),
\end{equation}
 where    $\sigma_*: \mathbb{R}^d \times\mathbb{R}^D \rightarrow \mathbb{R}$ is the activation function.
For simplicity, we focus on the    ReLU  activation and the  mean-square loss, i.e.,  $\sigma_*({\bm x};{\bm{\theta}_i})=\sigma(\langle{\bm x},{\bm{\theta}_i}\rangle)$ and $l (\bm{\theta};({\bm x}, y))= (y - f_{\bm{\theta}}({\bm x}))^2$, 
which can be extended to other activation functions and losses in \cite{SongMei2018meanfield,SongMei2019meanfield}.

Meanwhile,  we consider  FedAvg  with each client  undergoing $p$ local iterations in the $m$-th round, and represent the global-mode update  on the $i$-th neuron $\boldsymbol{\theta}_i$ as $\Delta \boldsymbol{\theta}_i^{(m)} = \boldsymbol{\theta}_i^{m+1,0}
 - \boldsymbol{\theta}_i^{m,0}$ to get: 
\begin{equation} \label{equation:update_FedAvg_simple}
\begin{aligned}
\Delta \boldsymbol{\theta}_i^{(m)} = & 2 \sum_{\tau=0}^p  {s}^{m,\tau} 
   (y^{m,\tau}-\Bar{y}^{m,\tau})  \nabla_{\Bar{\boldsymbol{\theta}}_i} \sigma(\boldsymbol{x}^{m,\tau} ; \Bar{\boldsymbol{\theta}_{i}}^{m,\tau})  \\
&  +       2 \sum_{\tau=0}^p {s}^{m,\tau} \boldsymbol{n}_{i}^{m,\tau},  
\end{aligned}
\end{equation}
where  $\tau \in [p]$ denotes the $\tau$-th local iteration, ${s}^{m,\tau} $ denotes the step size    at the $\tau$-th iteration,
$\Bar{\boldsymbol{\theta}_{i}}^{m,\tau}=\sum_{k=1}^{K}  \frac{n_k}{n} \boldsymbol{\theta}_{i,k}^{m,\tau}$ denotes the weighted average  of all the $i$-th client neurons, 
$\Bar{y}^{m,\tau} = 1/N \sum_{i=1}^N \sigma({\boldsymbol{x}^{m,\tau}};{\Bar{\boldsymbol{\theta}_{i}}^{m,\tau}})$ denotes the output of the weighted-average model $\Bar{\boldsymbol{\theta}}^{m,\tau}$  based on one-pass data $({\boldsymbol{x}^{m,\tau}},{\boldsymbol{y}^{m,\tau}}) \in \mathbb{P}$, and $ \boldsymbol{n}_{i}^{m,\tau}$ denote the noise induced by data heterogeneity.   
Specifically, we follow Definition \ref{def:data_heterogeneity} and represent $ \boldsymbol{n}_{i}^{m,\tau}$ (the superscript is omitted for brevity) as:
\begin{equation}\label{equation:noise}
\begin{aligned} 
         \boldsymbol{n}_{i} =    \sum_{k=1}^{K} \frac{n_k}{n}  &  \left[(y_k-\hat{y}_k) \right. \underbrace{[  \nabla_{\boldsymbol{\theta}_{i,k}} \sigma(\boldsymbol{x}_k; \boldsymbol{\theta}_{i,k})  -  \nabla_{\Bar{\boldsymbol{\theta}}_i} \sigma(\boldsymbol{x}_k ; \Bar{\boldsymbol{\theta}_{i}})]}_{\text{gradient drift}} 
         \\ &    +         \underbrace{(\Bar{y}_k-\hat{y}_k)}_{\text{output bias}}  \left.\nabla_{\Bar{\boldsymbol{\theta}}_i} \sigma(\boldsymbol{x}_k ; \Bar{\boldsymbol{\theta}_{i}})  \right],
\end{aligned}
\end{equation}
 where  the gradient drift and output bias depend on data heterogeneity, i.e., $(x_k,y_k) \in \mathbb{P}_k $ and $ \in \mathbb{P}$, but $\mathbb{P}_k \neq \mathbb{P}$. 
 
Furthermore, the trajectory of $\Bar{\boldsymbol{\theta}_{i}}$ in (\ref{equation:update_FedAvg_simple}) subsumes the trajectory of the global model ${\boldsymbol{\theta}_{i}}$ since FedAvg performs weighted averaging on the global model at each round.
Therefore, we represent  the global-mode trajectory as follows:
\begin{equation}  \label{equation:update_FedAvg_onestep}
    \boldsymbol{\theta}_i^{\tau+1}  =   \boldsymbol{\theta}_i^{\tau}+
             2 s_\tau\left(y^\tau-\Bar{y}^{\tau}\right) \nabla_{\boldsymbol{\theta}_i} \sigma\left(\boldsymbol{x}_\tau ; \boldsymbol{\theta}_i^\tau\right)    +2 {s}^{\tau} { \boldsymbol{n}}_i^\tau,
\end{equation}
where  $\tau \in [Mp] $ when given a total round $M$ and $\boldsymbol{n}_i^\tau =0$ when $\tau\mod p \in [M]$. See the appendix for detailed proof.
 
We make the following assumptions  as per \cite{SongMei2018meanfield}:
\begin{assumption}\label{assumption1}
The step size $s_\tau$ is denoted as $s_k = \alpha \xi(\tau \varepsilon)$, where $\xi: \mathbb{R}_{\geq 0} \rightarrow \mathbb{R}_{>0}$ is bounded by $C_1$ and $C_1$-Lipschitz.
\end{assumption}

 \begin{assumption}\label{assumption2}
For  $(\boldsymbol{x},y) \sim \mathbb{P}$,   the label $y$ and  the activation function $ \sigma(\boldsymbol{x}, \boldsymbol{\theta})$    with   $C_2$ sub-Gaussian gradient  are   bounded by $K_2$. 
\end{assumption}

 \begin{assumption}\label{assumption3}
   The functions $v(\boldsymbol{\theta})=-\mathbb{E}\{y \sigma(\boldsymbol{x}, \boldsymbol{\theta}) \}$ and $u(\boldsymbol{\theta}_1,\boldsymbol{\theta}_2)=\mathbb{E}\{  \sigma(\boldsymbol{x}, \boldsymbol{\theta}_1)\sigma(\boldsymbol{x}, \boldsymbol{\theta}_2)\}$  are  differentiable, and their    gradients are bounded by $C_3$ and $C_3$-Lipschitz.
\end{assumption}

 \begin{assumption}\label{assumption4}
The initial condition   $\boldsymbol{\theta}_i^0$   is $K^2_4/D$-sub-Gaussian. Let $v(\boldsymbol{\theta})\in C^4(\mathbb{R}^D)$ and   $u(\boldsymbol{\theta}_1,\boldsymbol{\theta}_2)\in C^4(\mathbb{R}^D)$, and  $\nabla u^k_{\boldsymbol{\theta}_1}(\boldsymbol{\theta}_1,\boldsymbol{\theta}_2)$ is   uniformly bounded for $0 \leq k \leq 4$.
\end{assumption}

 \begin{assumption}\label{assumption5}
 For $m \in [M]$ and $\tau \in [p]$, the noise  $\boldsymbol{n}_{i}^{\tau}=\sqrt{   \beta h(\alpha)}\Bar{\boldsymbol{n}}_i^{\tau}$, 
 where $\Bar{\boldsymbol{n}}_i^\tau \sim \mathcal{N}\left(0, \boldsymbol{I}_D\right)$.
\end{assumption}
Assumptions \ref{assumption1}-\ref{assumption4} denote the requirements of mean-field theory on the learning rate $s_k$,  data distribution $(\boldsymbol{x}, y) \sim \mathbb{P}$,  activation function $\sigma$, and initialization $\rho_0$.
Assumption \ref{assumption5} indicates that the noise $\boldsymbol{n}_{i}$ in (\ref{equation:noise}) depends on data heterogeneity. 
Specifically, a generalized function $h(\alpha)$ is taken to measure the impact of data heterogeneity on $\boldsymbol{n}_{i}$, including the gradient drift and output bias, where $ \beta h(\alpha) \in [\beta,\infty)$ and $1/\alpha$ denotes the degree of data heterogeneity across clients (i.e., smaller $\alpha$, larger $h(\alpha))$.

We take the mean-field theory to quantify the difference between the global-mode trajectory loss and the DD-solution loss and have:
\begin{lemma}\label{lemma:PDE_SGD}
(Mean field approximation.)
 Assume that conditions 1-5 hold,      the solution $(\rho_t)_{t\geq0}$ of a PDE  with initialization $\rho_0$    can  approximate the update trajectory of the global mode $(\bm{\theta}_i^\tau)$   as (\ref{equation:update_FedAvg_onestep}) with initialization $\bm{\theta}_i^0 \sim \rho_0$ and unchanged data heterogeneity $h(\alpha)$ throughout   training. 
 When  $\beta=2s^{\tau}\varepsilon/D$,  $  h(\alpha)\leq  C_5$, $T\geq 1$ and $\varepsilon\leq1$,  there exists a constant $C$ (depending solely on the constants $C_i$ of assumptions 1-4) such that
\begin{align*}
  &  \sup_{\tau\in[0,T/\varepsilon]\cap \mathbb{N}}  \left|\mathcal{L}_N(\bm{\theta}^\tau) - \mathcal{L}(\rho_{\tau\varepsilon})\right|   \\
\leq &C e^{C T }\sqrt{h(\alpha)}\left(\frac{\sqrt{\log  N }+z}{\sqrt{N}}+\sqrt{\varepsilon}(\sqrt{D+\log  N }+z)\right),
\end{align*}
with probability at least $1-e^{-z^2}$.
\end{lemma}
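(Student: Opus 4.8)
The plan is to adapt the mean-field stochastic-gradient approximation of \cite{SongMei2018meanfield} to the federated update (\ref{equation:update_FedAvg_onestep}), treating the heterogeneity noise $\boldsymbol{n}_i^\tau$ as an isotropic Gaussian perturbation of variance $\beta h(\alpha)$ via Assumption \ref{assumption5}. The central object is the empirical neuron measure $\rho^{(N)}_\tau = \frac{1}{N}\sum_{i=1}^N \delta_{\boldsymbol{\theta}_i^\tau}$, for which the two-layer risk in (\ref{equation:twolayer_network_function}) is a smooth functional. I would first reduce the target quantity $|\mathcal{L}_N(\boldsymbol{\theta}^\tau)-\mathcal{L}(\rho_{\tau\varepsilon})|$ to a Wasserstein-type distance between $\rho^{(N)}_\tau$ and the PDE solution $\rho_{\tau\varepsilon}$, using the boundedness and Lipschitz gradients of $v$ and $u$ in Assumption \ref{assumption3}, so that it suffices to track the neuron distribution rather than the loss directly.

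First I would recast (\ref{equation:update_FedAvg_onestep}) as a step-$\varepsilon$ Euler discretization of a McKean--Vlasov SDE whose drift is the mean-field force $-\nabla\Psi(\boldsymbol{\theta};\rho_t)$ and whose diffusion coefficient is proportional to $h(\alpha)$, so that with $\beta = 2s^\tau\varepsilon/D$ the per-coordinate noise scale matches $\sqrt{2s^\tau\varepsilon\,h(\alpha)/D}$. I would couple this discrete trajectory to the continuous nonlinear dynamics driven by the same Gaussian increments, keeping the initialization $\boldsymbol{\theta}_i^0\sim\rho_0$ fixed. The one-step analysis then splits the error into a deterministic discretization part of order $\varepsilon^{3/2}$ (from the Lipschitz drift, Assumptions \ref{assumption1}--\ref{assumption4}) and a fluctuation part of order $\sqrt{\varepsilon\,h(\alpha)}$ from the noise increment. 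Both channels ultimately inherit the prefactor $\sqrt{h(\alpha)}$: the noise fluctuation carries it directly, while the discretization and finite-$N$ terms, which reproduce the baseline rate of \cite{SongMei2018meanfield}, are absorbed into the same prefactor using $h(\alpha)\ge 1$, a consequence of $\beta h(\alpha)\in[\beta,\infty)$ in Assumption \ref{assumption5}.

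Next I would propagate the per-step error over $\tau\le T/\varepsilon$ iterations by a discrete Gronwall argument. The Lipschitz property of the mean-field force makes the accumulated error grow by at most a factor $e^{C\tau\varepsilon}\le e^{CT}$, producing the $e^{CT}$ prefactor. Running the coupling in parallel across the $N$ particles and invoking propagation of chaos, the empirical measure $\rho^{(N)}_\tau$ concentrates around $\rho_{\tau\varepsilon}$ at rate $N^{-1/2}$; sub-Gaussian concentration under Assumptions \ref{assumption2} and \ref{assumption4} then yields the two summands $(\sqrt{\log N}+z)/\sqrt{N}$ and $\sqrt{\varepsilon}(\sqrt{D+\log N}+z)$, together with the confidence level $1-e^{-z^2}$ after a union bound over the $O(T/\varepsilon)$ time steps and the $z$-dependent tails. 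Transferring back through the Lipschitz loss functional from the first step then gives the claimed bound.

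The hard part will be controlling the heterogeneity noise so that its variance $\beta h(\alpha)$ enters precisely as the prefactor $\sqrt{h(\alpha)}$ without corrupting the mean-field coupling. Concretely, the structured term $\boldsymbol{n}_i$ of (\ref{equation:noise}) is a client-weighted combination of gradient drifts and output biases governed by Definition \ref{def:data_heterogeneity}, and I must verify that its isotropic Gaussian surrogate (Assumption \ref{assumption5}) is conditionally mean-zero and sub-Gaussian, so that the martingale and concentration machinery still applies, and that the Gronwall constant $C$ stays independent of $\alpha$ even though the diffusion coefficient depends on $\alpha$. Establishing this faithful noise reduction under bounded heterogeneity $h(\alpha)\le C_5$ is the delicate step; once it holds, the remaining estimates follow the template of \cite{SongMei2018meanfield} with only the noise-scaling bookkeeping changed.
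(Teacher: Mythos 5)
Your proposal is correct and is in substance the same argument the paper uses: the paper proves Lemma \ref{lemma:PDE_SGD} by passing through the chain SGD $\to$ gradient descent $\to$ particle dynamics $\to$ nonlinear (McKean--Vlasov) dynamics $\to$ PDE solution, i.e., by invoking the corresponding propositions of Mei et al.\ restated as Proposition \ref{proposition:4approximation} with $\sqrt{h(\alpha)}$ tracked outside the constants, which is precisely the coupling, discrete-Gronwall, and propagation-of-chaos machinery you propose to re-derive directly. Your noise bookkeeping --- treating the heterogeneity noise as Gaussian of variance $\beta h(\alpha)$ via Assumption \ref{assumption5}, letting the noise-fluctuation terms carry $\sqrt{h(\alpha)}$ explicitly, and absorbing the baseline discretization and finite-$N$ terms into that prefactor using $h(\alpha)\geq 1$ --- is exactly how the paper's four bounds are summed into the stated estimate.
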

 Lemma \ref{lemma:PDE_SGD} shows when the number of neurons $N$ is sufficiently large and the step size $\varepsilon$ is sufficiently small, the global-mode neurons $\boldsymbol{\theta}_i^\tau$ obtained by running $\tau$ steps as (\ref{equation:update_FedAvg_onestep}) can be approximated as $N$ i.i.d. particles that evolve according to the DD at time $\tau\varepsilon$.
Then, we utilize Lemma \ref{lemma:PDE_SGD} to show that the global mode remains dropout-stable even when faced with data heterogeneity.

\begin{theorem}\label{theorem:dropout_stability_FL}
   (Dropout stability under data heterogeneity.) Assume that conditions \ref{assumption1}-\ref{assumption5} hold, and fix $T \geq 1$ and $h(\alpha)\geq C_5$. Let the global model $\boldsymbol{\theta}^\tau$ be obtained by running $\tau$ steps as (\ref{equation:update_FedAvg_onestep}) with data $\left\{\left(\boldsymbol{x}_i, y_i\right)\right\}_{i=0}^\tau {\sim} \mathbb{P}$ and initialization $\rho_0$. Then, the following results hold:
 Pick $\mathcal{A} \subseteq[N]$ independent of $\boldsymbol{\theta}^\tau$. Then, with probability at least $1-e^{-z^2}$, for all $\tau \in[T / \varepsilon], \theta^\tau$ is $\varepsilon_{\mathrm{D}}$-dropout stable with $\varepsilon_{\mathrm{D}}$ equal to
$$
C e^{C T}\sqrt{h(\alpha)}\left(\frac{\sqrt{\log |\mathcal{A}|}+z}{\sqrt{|\mathcal{A}|}}+\sqrt{\varepsilon}(\sqrt{D+\log N}+z)\right),
$$
where the constant $C$ depends only on the constants $C_i$ in the assumptions \ref{assumption1}-\ref{assumption4}.
\end{theorem}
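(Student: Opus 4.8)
The plan is to obtain the dropout-stability bound from Lemma \ref{lemma:PDE_SGD} by inserting the distributional-dynamic (DD) loss $\mathcal{L}(\rho_{\tau\varepsilon})$ as an intermediate quantity and invoking the mean-field approximation twice: once for the full $N$-neuron global model and once for the retained $|\mathcal{A}|$-neuron subnetwork. Concretely, I would first write the triangle inequality
\begin{equation*}
\left|\mathcal{L}_N(\boldsymbol{\theta}^\tau) - \mathcal{L}_{|\mathcal{A}|}(\boldsymbol{\theta}_{\mathrm{S}}^\tau)\right| \leq \left|\mathcal{L}_N(\boldsymbol{\theta}^\tau) - \mathcal{L}(\rho_{\tau\varepsilon})\right| + \left|\mathcal{L}(\rho_{\tau\varepsilon}) - \mathcal{L}_{|\mathcal{A}|}(\boldsymbol{\theta}_{\mathrm{S}}^\tau)\right|,
\end{equation*}
whose left-hand side is exactly the quantity that Definition \ref{def:dropout_stability} requires to be controlled by $\varepsilon_{\mathrm{D}}$.

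For the first term I would apply Lemma \ref{lemma:PDE_SGD} directly: since $\boldsymbol{\theta}^\tau$ is produced by the global-mode update (\ref{equation:update_FedAvg_onestep}) with initialization $\rho_0$ and heterogeneity level $h(\alpha)$, it is uniformly close to $\mathcal{L}(\rho_{\tau\varepsilon})$ over $\tau \in [T/\varepsilon]$, giving $Ce^{CT}\sqrt{h(\alpha)}\bigl(\frac{\sqrt{\log N}+z}{\sqrt{N}} + \sqrt{\varepsilon}(\sqrt{D+\log N}+z)\bigr)$ with probability at least $1-e^{-z^2}$. The second term is the crux. Because $\mathcal{A}$ is fixed independently of $\boldsymbol{\theta}^\tau$ and the mean-field description makes the neurons behave as i.i.d.\ particles drawn from $\rho_{\tau\varepsilon}$, the retained subset $\{\boldsymbol{\theta}_i^\tau\}_{i\in\mathcal{A}}$ is itself an $|\mathcal{A}|$-particle system tracking the same DD; under the normalization of (\ref{equation:twolayer_network_function}) the subnetwork loss $\mathcal{L}_{|\mathcal{A}|}$ averages $\sigma_*(\boldsymbol{x};\cdot)$ over these $|\mathcal{A}|$ particles and is therefore an unbiased Monte-Carlo estimator of the DD quantity. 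Re-applying Lemma \ref{lemma:PDE_SGD} with $N$ replaced by $|\mathcal{A}|$ in the concentration term then yields $Ce^{CT}\sqrt{h(\alpha)}\bigl(\frac{\sqrt{\log|\mathcal{A}|}+z}{\sqrt{|\mathcal{A}|}} + \sqrt{\varepsilon}(\sqrt{D+\log N}+z)\bigr)$, where the discretization term keeps $\log N$ rather than $\log|\mathcal{A}|$ because that error is inherited from the full $N$-neuron trajectory that generated the common mean field.

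Summing the two estimates and using $|\mathcal{A}| \leq N$ — so that the decreasing map $x \mapsto (\sqrt{\log x}+z)/\sqrt{x}$ makes the subnetwork concentration term dominate its full-network counterpart — collapses the four terms into two, and absorbing the resulting factor of two into $C$ produces precisely the claimed $\varepsilon_{\mathrm{D}}$. The heterogeneity factor $\sqrt{h(\alpha)}$ multiplies both estimates and so passes through unchanged, which is how the bound inherits its monotone dependence on $1/\alpha$ via Assumption \ref{assumption5}.

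The step I expect to be the main obstacle is justifying the second application of Lemma \ref{lemma:PDE_SGD} to the subnetwork. The neurons are not literally independent during training: the update (\ref{equation:update_FedAvg_onestep}) couples every neuron through the shared output $\bar{y}^\tau$, an empirical average over all $N$ neurons and over the client weightings. Making the ``i.i.d.\ particles from $\rho_{\tau\varepsilon}$'' statement rigorous requires a propagation-of-chaos argument showing the coupling acts asymptotically only through the mean field, so that conditioning on the DD renders the retained neurons exchangeable and $\rho_{\tau\varepsilon}$-distributed. One must further check that the heterogeneity noise $\boldsymbol{n}_i^\tau$ of (\ref{equation:noise}), entering through Assumption \ref{assumption5}, does not correlate the retained neurons in a way that defeats this restriction; this holds precisely because $\mathcal{A}$ is chosen independently of the noise realization, leaving the $1/\sqrt{|\mathcal{A}|}$ concentration scaling intact.
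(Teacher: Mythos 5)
Your overall strategy --- triangle inequality through the DD loss $\mathcal{L}(\rho_{\tau\varepsilon})$, Lemma \ref{lemma:PDE_SGD} for the full network, then a concentration estimate for the retained neurons --- matches the paper's skeleton, and your first term is handled exactly as in the paper. But the crux of your argument, ``re-applying Lemma \ref{lemma:PDE_SGD} with $N$ replaced by $|\mathcal{A}|$,'' is not a valid application of that lemma, and you flag the reason yourself without resolving it. Lemma \ref{lemma:PDE_SGD} bounds the loss of a network whose neurons were \emph{trained} by the update (\ref{equation:update_FedAvg_onestep}); the subnetwork $\boldsymbol{\theta}_{\mathrm{S}}^\tau$ was never trained as an $|\mathcal{A}|$-neuron system --- its neurons evolved inside the full $N$-neuron network, coupled through the shared output $\bar{y}^\tau$ --- so the lemma's hypotheses do not hold for it. The propagation-of-chaos statement you defer to (``conditioning on the DD renders the retained neurons exchangeable and $\rho_{\tau\varepsilon}$-distributed'') is precisely the missing core of the proof, not a technicality that can be postponed: without it your second term has no bound at all.

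The paper closes this gap by inserting a second intermediate object: the dropout restriction $\overline{\boldsymbol{\theta}}_{\mathrm{S}}^{\tau\varepsilon}$ of the \emph{nonlinear-dynamics (PDE) particles}, which genuinely are i.i.d.\ draws from $\rho_{\tau\varepsilon}$. This splits your problematic second term into two tractable pieces: (i) a coupling term $|\mathcal{L}_{|\mathcal{A}|}(\boldsymbol{\theta}_{\mathrm{S}}^\tau)-\mathcal{L}_{|\mathcal{A}|}(\overline{\boldsymbol{\theta}}_{\mathrm{S}}^{\tau\varepsilon})|$, bounded by $4C_2\max_{i\in[N]}\|\boldsymbol{\theta}_i^\tau-\overline{\boldsymbol{\theta}}_i^{\tau\varepsilon}\|_2$ using the boundedness of $\sigma$ and its gradient (Assumption \ref{assumption2}), and then controlled by summing the four approximation inequalities of Proposition \ref{proposition:4approximation} (nonlinear dynamics $\to$ particle dynamics $\to$ gradient descent $\to$ SGD) --- this chain of estimates is exactly the quantitative propagation-of-chaos input you identified as missing; and (ii) a concentration term $|\mathcal{L}_{|\mathcal{A}|}(\overline{\boldsymbol{\theta}}_{\mathrm{S}}^{\tau\varepsilon})-\mathcal{L}(\rho_{\tau\varepsilon})|$, handled by McDiarmid's inequality (changing one particle moves the loss by at most $C/|\mathcal{A}|$), a bias bound of order $C/|\mathcal{A}|$, and a union bound over the time grid with Lemma \ref{lemma:distance_bound_PDE} controlling the variation inside each grid interval --- this is what actually produces the $(\sqrt{\log|\mathcal{A}|}+z)/\sqrt{|\mathcal{A}|}$ factor you asserted. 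Your decomposition and final bookkeeping (taking the maximum of the two concentration scales, absorbing constants, carrying $\sqrt{h(\alpha)}$ through) are consistent with the paper's conclusion, but the step you labelled ``the main obstacle'' is the actual substance of the proof, and it is carried out by coupling to the ideal particles, not by re-invoking the mean-field lemma on the subnetwork.
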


Following \cite{Kuditipudi2019Explaining_Landscape}, we demonstrate that dropout-stable NNs in FL have mode connectivity as follows.
 
\begin{theorem}\label{theorem:mode_connectivity_FL}
 (Mode connectivity under data heterogeneity.)
Under Theorem \ref{theorem:dropout_stability_FL},   fix $T\geq 1, T^{\prime} \geq 1$ and let $\boldsymbol{\theta}$ and $ \boldsymbol{\theta}^{\prime}$ be the global models obtained by running running $\tau$ steps as (\ref{equation:update_FedAvg_onestep}) with data heterogeneity $h(\alpha)$ and $h(\alpha^\prime)$ and initialization $\rho_0$ and $\rho_0^{\prime}$, respectively. 
Then, with probability at least $1-e^{-z^2}$, for all $\tau \in[T / \varepsilon]$ and $\tau^{\prime} \in\left[T^{\prime} / \varepsilon\right], \boldsymbol{\theta}^\tau$ and ${\boldsymbol{\theta}^\prime}^{\tau^{\prime}}$ are $\varepsilon_{\mathrm{C}}$-connected with $\varepsilon_{\mathrm{C}}$ equal to
\begin{equation*}
    \begin{aligned}
    C e^{C T_{\max }}\max(\sqrt{h(\alpha)},\sqrt{h(\alpha^\prime)})\left(\frac{\sqrt{\log N}+z}{\sqrt{N}} \right. \\
  \quad\quad\quad \left.  +\sqrt{\varepsilon}(\sqrt{D+\log N}+z)\right),
\end{aligned}
\end{equation*}
where $T_{\max }=\max \left(T, T^{\prime}\right)$. Furthermore, the path connecting $\boldsymbol{\theta}^k$ with ${\boldsymbol{\theta}^{\prime}}^{\tau^{\prime}}$ consists of 7 line segments.  
\end{theorem}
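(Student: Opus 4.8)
The plan is to derive Theorem~\ref{theorem:mode_connectivity_FL} as a corollary of the dropout-stability estimate in Theorem~\ref{theorem:dropout_stability_FL}, following the ``dropout stability implies connectivity'' strategy of \cite{Kuditipudi2019Explaining_Landscape} while tracking how the heterogeneity factor propagates through the path construction. First I would apply Theorem~\ref{theorem:dropout_stability_FL} separately to the two endpoints: choosing a subset $\mathcal{A}\subseteq[N]$ of size $\asymp N$ (e.g. $N/2$) independent of the weights, I obtain that $\boldsymbol{\theta}^\tau$ is $\varepsilon_{\mathrm D}$-dropout stable with error $\lesssim Ce^{CT}\sqrt{h(\alpha)}(\cdots)$ and, on an independent copy of the event, that ${\boldsymbol{\theta}'}^{\tau'}$ is dropout stable with error governed by $T'$ and $h(\alpha')$. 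Since $|\mathcal{A}|\asymp N$, the $\sqrt{\log|\mathcal{A}|}/\sqrt{|\mathcal{A}|}$ term of Theorem~\ref{theorem:dropout_stability_FL} collapses to the $\sqrt{\log N}/\sqrt N$ term appearing in the statement.

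Next I would construct the explicit piecewise-linear path. Writing each network as $N$ neurons $\boldsymbol{\theta}_i=(a_i,\boldsymbol{w}_i)$ with $f_{\boldsymbol{\theta}}(\boldsymbol{x})=\frac1N\sum_i a_i\sigma(\langle\boldsymbol{w}_i,\boldsymbol{x}\rangle)$, I split the neurons into two halves along $\mathcal{A}$ and its complement. The central observation I would exploit is that the mean-square loss is \emph{convex in the network output} $f$; hence any segment along which $f$ traces a convex combination of two low-loss functions stays below the maximum of their losses. The dropout-duplicated network (the half indexed by $\mathcal{A}$ copied into both halves with output weights doubled) realises exactly the dropout network $f_{\boldsymbol{\theta}_{\mathrm S}}$, whose loss is within $\varepsilon_{\mathrm D}$ of $\mathcal{L}_N(\boldsymbol{\theta})$. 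I would then chain the segments so that: (i) $\boldsymbol{\theta}^\tau$ collapses to its dropout-duplicate, with output weights interpolated so that $f$ is the convex combination $(1-t)f_{\boldsymbol{\theta}^\tau}+t f_{\mathrm{drop}}$; (ii) the now output-silent half has its first-layer weights moved at zero loss cost toward the configuration of ${\boldsymbol{\theta}'}^{\tau'}$; (iii) a convex-combination fade transfers the active half from $f_{\mathrm{drop},\boldsymbol{\theta}}$ to $f_{\mathrm{drop},\boldsymbol{\theta}'}$; and (iv) the mirror-image steps re-expand to ${\boldsymbol{\theta}'}^{\tau'}$, up to the neuron permutation implicit in the choice of $\mathcal{A}$. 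Counting the two collapse subroutines, the two weight-relocation segments, the central fade, and the alignment segments needed to reconcile the two endpoints' dropout subsets yields the stated $7$ line segments.

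Finally I would read off the connectivity error. Along every segment the loss is bounded by the larger of the two relevant dropout losses, so $\varepsilon_{\mathrm C}\le\max(\varepsilon_{\mathrm D},\varepsilon_{\mathrm D}')$, which after substituting the Theorem~\ref{theorem:dropout_stability_FL} bounds and using $e^{CT},e^{CT'}\le e^{CT_{\max}}$ together with $\max(\sqrt{h(\alpha)},\sqrt{h(\alpha')})$ produces the claimed expression; the probability $1-e^{-z^2}$ is inherited by intersecting the two dropout-stability events and relabelling the constant multiplying $z$.

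The main obstacle is twofold and lives entirely in step (iv) and the alignment. The convexity argument guarantees low loss only when each segment's output is genuinely a convex combination of two \emph{dropout-stable} functions, so I must verify that relocating the silent half's first-layer weights and, crucially, reconciling the two endpoints' (independently chosen, possibly differently permuted) dropout subsets can both be arranged without ever routing $f$ through a non-dropout configuration — a neuron transposition is \emph{not} loss-preserving under linear interpolation, which is precisely why the naive $5$-segment path must be padded to $7$. The second delicate point is bookkeeping: since $\boldsymbol{\theta}^\tau$ and ${\boldsymbol{\theta}'}^{\tau'}$ arise from \emph{different} initializations $\rho_0,\rho_0'$, heterogeneities $h(\alpha),h(\alpha')$, and horizons $T,T'$, I must ensure the heterogeneity dependence enters only through the two endpoint dropout errors and not through the intermediate segments, so that the final bound depends on the data heterogeneity solely via $\max(\sqrt{h(\alpha)},\sqrt{h(\alpha')})$ as claimed.
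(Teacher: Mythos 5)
Your plan follows the paper's own route almost exactly: invoke Theorem~\ref{theorem:dropout_stability_FL} at both endpoints with the common choice $\mathcal{A}=[N/2]$ (so $\sqrt{\log|\mathcal{A}|}/\sqrt{|\mathcal{A}|}$ becomes $\sqrt{\log N}/\sqrt{N}$ up to a constant), build a piecewise-linear path from a collapse, zero-cost relocations of output-silent neurons, and a central output-weight fade controlled by convexity of the squared loss in the last-layer weights, then conclude $\varepsilon_{\mathrm{C}}\le\max(\varepsilon_{\mathrm{D}},\varepsilon_{\mathrm{D}}')$ and absorb $e^{CT},e^{CT'}\le e^{CT_{\max}}$ and the union bound over the two high-probability events into the constants. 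This is precisely the paper's argument, and your segment bookkeeping ($2+2+1+2=7$) matches its chain.

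The one step you leave open --- your ``main obstacle,'' the two alignment segments --- has a clean resolution in the paper, and it never requires interpolating a neuron transposition. After the central fade, the copy of the dropout network of $\boldsymbol{\theta}'$ occupies the \emph{second}-half positions with output weights $2$, while the first-half positions are silent. The paper then (a) relocates the silent first half so that it carries the identical weights $\boldsymbol{\theta}'_1,\dots,\boldsymbol{\theta}'_{N/2}$ (zero cost, since those output weights vanish), and (b) linearly interpolates the output weights from $(0,\dots,0,2,\dots,2)$ to $(2,\dots,2,0,\dots,0)$; because the two halves now compute \emph{identical} functions, the network output, and hence the loss, is exactly constant along this swap segment. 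A final relocation of the now-silent second half to $\boldsymbol{\theta}'_{N/2+1},\dots,\boldsymbol{\theta}'_N$ and the expansion to $\boldsymbol{\theta}'$ close the chain: collapse, relocate, fade, relocate, swap, relocate, expand. Note also that your concern about ``independently chosen, possibly differently permuted dropout subsets'' does not arise: since Theorem~\ref{theorem:dropout_stability_FL} allows any $\mathcal{A}$ chosen independently of the weights, one simply fixes the \emph{same} subset $\mathcal{A}=[N/2]$ for both endpoints, and all the heterogeneity dependence indeed enters only through the two endpoint dropout errors, as you anticipated.
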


Theorem \ref{theorem:mode_connectivity_FL} shows that the global modes obtained from FL under different data heterogeneity can be connected, as shown in Figure \ref{fig:global_model_acc_mode_connecitvity}, and then mode-connectivity error depends on the data heterogeneity $h(\alpha)$ and neuron numbers $N$.
See the appendix for the detailed proof of Theorems \ref{theorem:dropout_stability_FL} and \ref{theorem:mode_connectivity_FL}.

 \section{Numerical Results}
To verify our analysis results, we undertake two classification tasks: \textit{i)} using a two-layer NN on MNIST; \textit{ii)} using a VGG11 network from scratch on CIFAR-10. 
The NNs take ReLU activations for both tasks and are optimized based on the cross-entropy loss.
For the FL setup,  we consider ten clients  with $50/200$ training rounds for MNIST/CIFAR-10.
Client optimizers are SGD  with a learning rate of 0.02/0.01  for MNIST/CIFAR-10, where the mini-batch size is 50 and the number of local iterations is 10.
 Meanwhile, within the same task,  the NN initialization is consistent under varying data heterogeneity and independent of neuron number $N$, which aligns with the theoretical assumptions.
 We perform 20 independent trials to measure mean value and standard deviation in Figures \ref{fig:dropout_comparision} and \ref{fig:width_comparision}.

\textbf{Dropout stability under varying data heterogeneity.}
\begin{figure}[t]
	\centering  
	\subfigbottomskip=2pt 
	\subfigcapskip=0pt 
 	\subfigure{
		\includegraphics[width=0.46\linewidth,height=90pt]{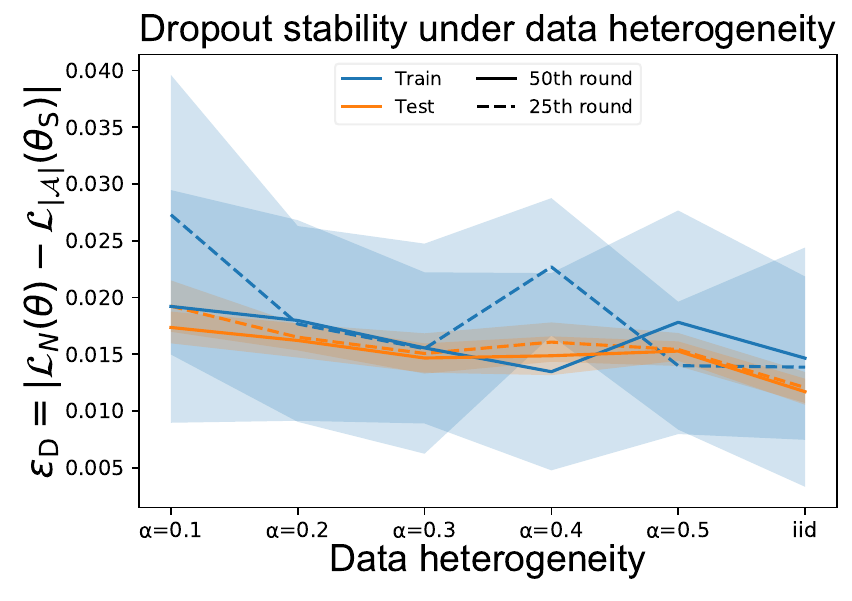}
  \label{fig:dropout_comparision_a}}
	\subfigure{
		\includegraphics[width=0.46\linewidth,height=90pt]{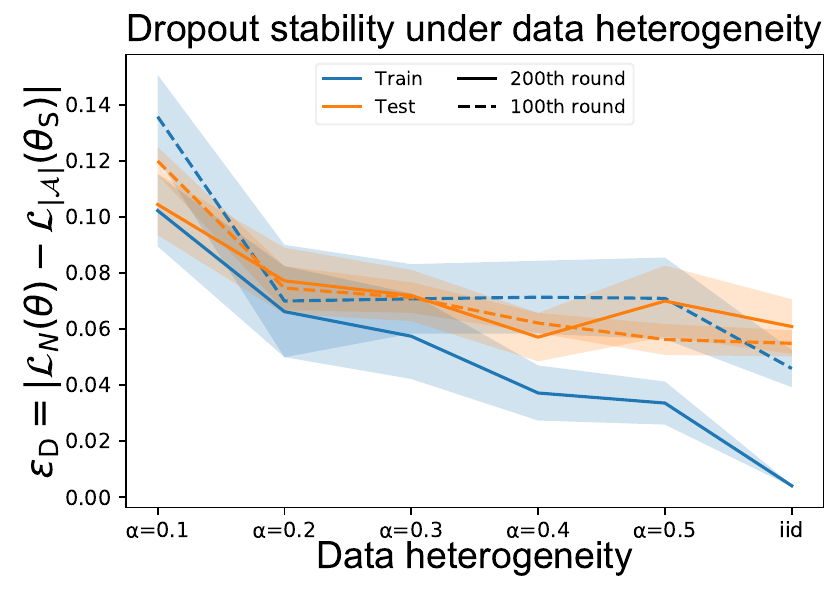}
    \label{fig:dropout_comparision_b}}
    \caption{Training/test dropout stability under varying data heterogeneity on MNIST with a two-layer NN (\textit{left}) and CIFAR-10 with VGG11 (\textit{right}).}
  \label{fig:dropout_comparision}
\end{figure}

Figure \ref{fig:dropout_comparision} compares the dropout error $ \varepsilon_{\mathrm{D}}$ under varying data heterogeneity as per Definition \ref{def:dropout_stability} on the training dataset (blue curve) and test dataset  (orange curve).
For MNIST, the neuron number of the two-layer NN is $N=1600$, and for CIFAR-10, VGG11 follows the standard setup in \cite{simonyan2014very}.
As expected from Theorem \ref{theorem:dropout_stability_FL}, the dropout error shows a downward trend as client datasets become more homogeneous.
For example,  the case of $\alpha=0.1$   has a more significant training/test dropout error than the i.i.d case at the different training stages of both tasks.
When the same heterogeneity setting (e.g., $\alpha=0.1$) is adopted, the effect of data heterogeneity on CIFAR-10 is stronger than that on MNIST.
This leads to a larger dropout error in CIFAR-10, where the dropout error is $\varepsilon_{\mathrm{D}} < 0.04$ for MNIST and $\varepsilon_{\mathrm{D}} < 0.15$ for CIFAR-10.
Moreover,  we also consider two training stages in Figure \ref{fig:dropout_comparision}, including the middle stage (dashed curve) and the ending stage (solid curve).
For both tasks, the dropout-error gap between the two stages keeps small, i.e., $\varepsilon_{\mathrm{D}} < 0.01$ for MNIST and $\varepsilon_{\mathrm{D}} < 0.05$ for CIFAR-10.
As expected, this indicates that the mean-field theory can view the dropout-out dynamics as long as the training time is sufficient.

 \textbf{Mode connectivity under varying neuron numbers and data heterogeneity. }
\begin{figure}[t]
	\centering  
	\subfigbottomskip=2pt 
 	\subfigure{
		\includegraphics[width=0.46\linewidth,height=90pt]{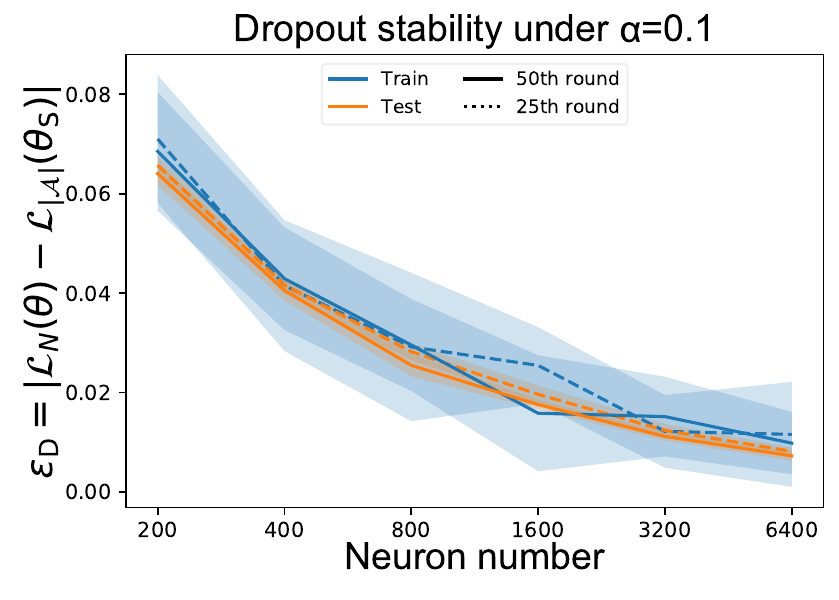}
  \label{fig:width_comparision_a}}
	\subfigure{
		\includegraphics[width=0.46\linewidth,height=90pt]{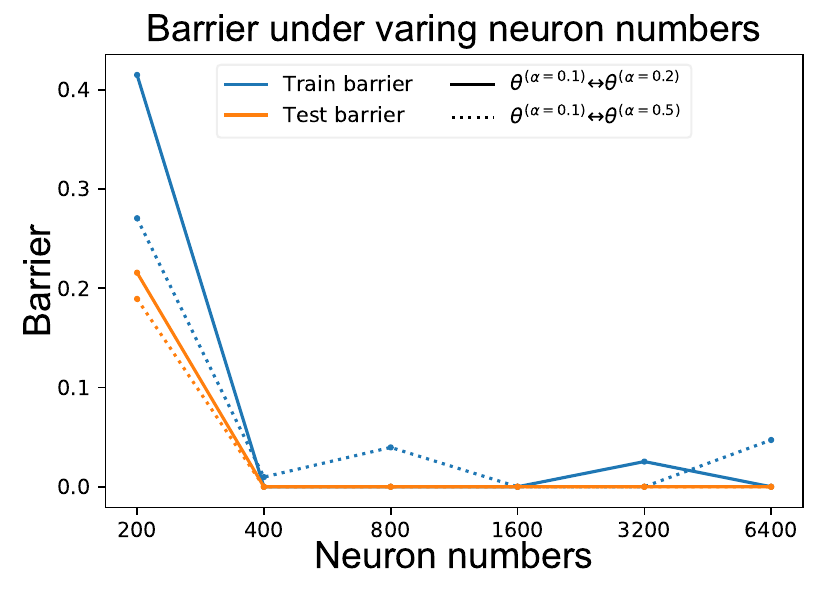}
    \label{fig:width_comparision_b}}
    \caption{Dropout stability under varying neuron numbers (\textit{left}) and linear-interpolation barrier under varying neuron numbers and data heterogeneity   (\textit{right}).}
  \label{fig:width_comparision}
\end{figure}
Figure \ref{fig:width_comparision} takes dropout stability (left sub-figure) and linear-interpolation barrier (right sub-figure) to validate our analysis on mode connectivity of global modes in FL.
 Theorem \ref{theorem:mode_connectivity_FL} shows that dropout stability implies mode connectivity, and the connectivity error $\varepsilon_{\mathrm{C}}$ depends on the neuron number and data heterogeneity.
For the MNIST under $\alpha=0.1$, we consider a two-layer NN and plot the relationship between its dropout error $\varepsilon_{\mathrm{D}}$ and neuron number $N$.  
As expected, the training/test dropout error rapidly decreases as the width of the network grows. 
The dropout error for $N = 6400$ is less than $0.01$, while for $N = 200$, it is up to $0.06$.
Then, we take the linear-interpolation barrier to explore the connectivity error $\varepsilon_{\mathrm{C}}$, and connect the global mode obtained in the case $\alpha=0.1$ to that of the cases  $\alpha=0.2$ (solid curve) and $\alpha=0.5$ (dashed curve).
With the expansion of the network, the barrier (i.e., connectivity error $\varepsilon_{\mathrm{C}}$)  diminishes rapidly, which is consistent with dropout error $\varepsilon_{\mathrm{D}}$ v.s. neuron number $N$.
Furthermore, the barrier of $\alpha=0.1$ to $\alpha=0.2$ is higher than that of $\alpha=0.1$ to $\alpha=0.5$ when $N=200$.
This is expected because when $N$ is small, the connectivity error $\varepsilon_{\mathrm{C}}$ is amplified by the effect of data heterogeneity $h(\alpha)$, i.e., the effect of data heterogeneity amortized to neurons would be even greater.

 \textbf{Neuron noise $\boldsymbol{n}_i$ under varying data heterogeneity.}
 \begin{figure}[t]
	\centering  
	\subfigbottomskip=2pt 
 	\subfigure{
		\includegraphics[width=0.46\linewidth,height=90pt]{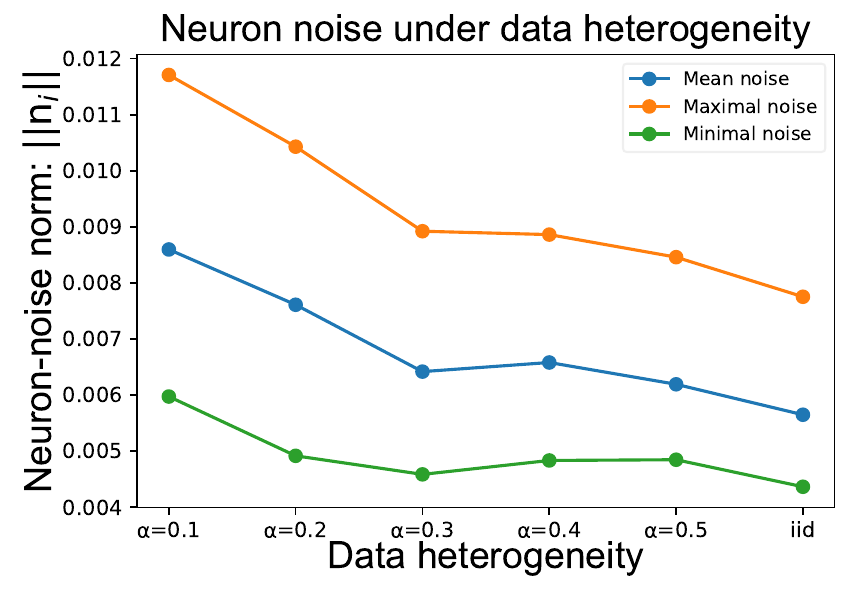}
  \label{fig:noise_comparision_a}}
	\subfigure{
		\includegraphics[width=0.46\linewidth,height=91pt]{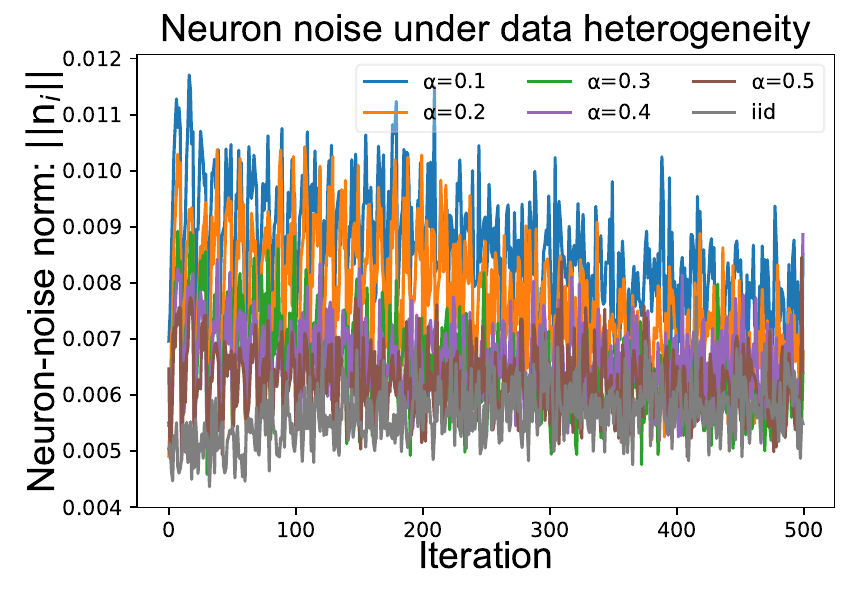}
    \label{fig:noise_comparision_b}}
    \caption{Neuron noise statistics under data heterogeneity (\textit{left}) and neuron noise along the whole training (\textit{right}).}
  \label{fig:noise_comparision}
\end{figure}
Figure \ref{fig:noise_comparision} illustrates the L2 norm of the global-mode update noise under varying data heterogeneity in (\ref{equation:noise}) along all iterations in the right sub-figure, where the mean, maximal, and minimal noise norms are reported in the left sub-figure.
As shown in the left sub-figure,   the noise norm decrease as data heterogeneity alleviates.
Meanwhile,  the maximal noise norm is limited, i.e., $<0.012$, but the noise norm does not decrease to zero even in the i.i.d case due to the randomness of SGD.
Moreover, according to the right sub-figure,   the noise is not strongly correlated with the iterations when the data heterogeneity is mild (i.e., $\alpha \geq 0.4$),  and slightly decreases along the training iterations.
This  verifies Assumption \ref{assumption5}, where the neuron noise arises from data heterogeneity and remains independent of the number of iterations.

\section{Discussion and Future Work}

Through experimental and theoretical analysis,
we unravel the relationship between client and global modes in FL under varying data heterogeneity.
As data become more heterogeneous, finding solutions that optimize both the client and  FL objectives becomes increasingly challenging due to the reduction of overlapping solutions between client and global modes.
Meanwhile, the FL solutions obtained from different setups of data heterogeneity belong to distinct global modes, and are located in distinct basins of a joint region.
These solutions can be connected by a specific PolyChain while showing an error barrier along the linear interpolation.
This suggests that global modes in FL are not isolated but rather interconnected within a manifold.
Our analysis demonstrates that the connectivity error between global modes decreases with weakened data heterogeneity and wider trained models. 

The findings suggest potential directions for future work for the design of FL:
\textit{i}) exploring the solutions to the FL objective that maintain low error for both client and global test sets, i.e., the solutions to both client and FL objectives; 
\textit{ii}) investigating the effect of model depth and architecture in FL  based on mean-field theory;
\textit{iii}) understanding FL training dynamics by  the gap between random and pre-trained initialization (see a preliminary investigation in Appendix).

\bibliography{asilomar24}
\bibliographystyle{IEEEtran}

 \appendix

\section{Additional Experiments}

\subsection{Additional Experiments in client-mode connectivity}

\begin{figure}[th]
	\centering  
	\subfigbottomskip=2pt 
	\subfigcapskip=0pt 
 	\subfigure{
		\includegraphics[width=0.46\linewidth,height=90pt]{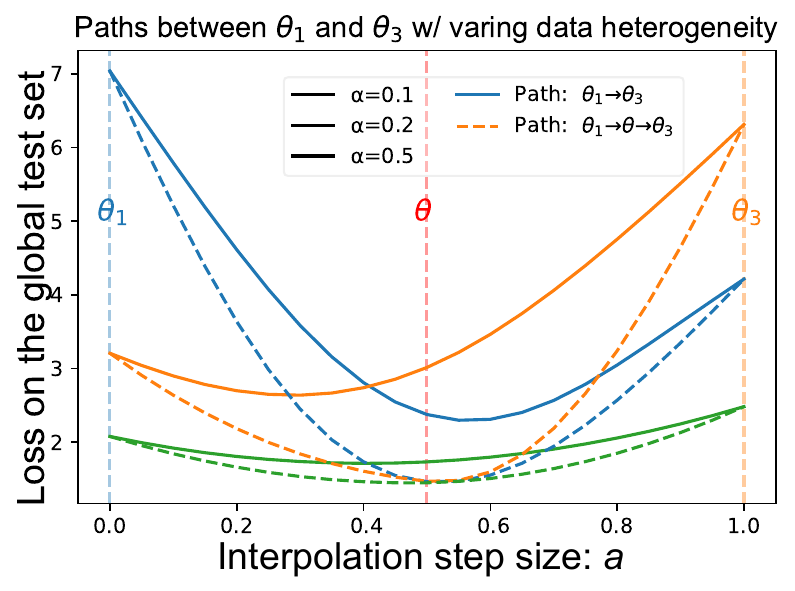}
  \label{fig:global_test_loss_data_heterogeneity_a}}
	\subfigure{
		\includegraphics[width=0.46\linewidth,height=90pt]{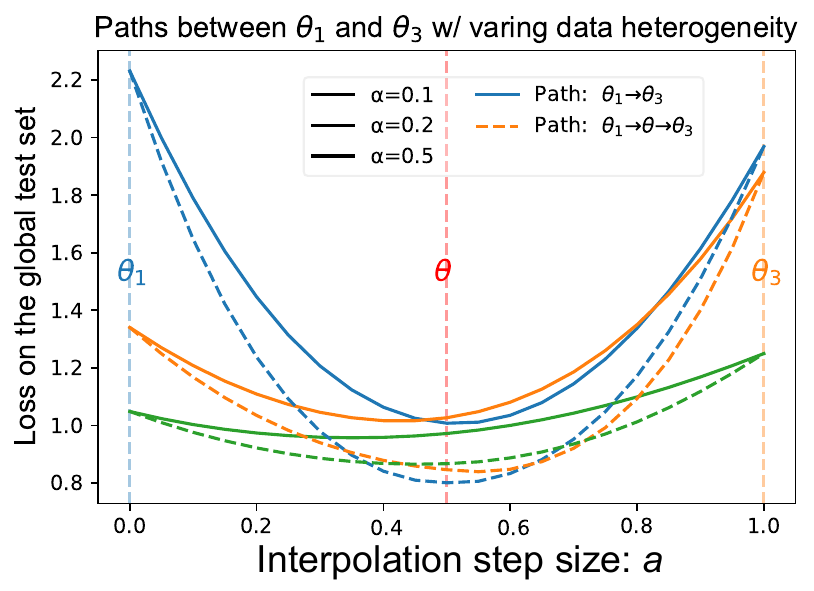}
    \label{fig:global_test_loss_data_heterogeneity_b}}
    \caption{Test loss of interpolated model along two given paths. The test set is the global test set of FL, where the VGG11 trained in FL is   from scratch (\textit{left}) and based on pre-trained VGG11 on Imagenet (\textit{right}).}
  \label{fig:global_test_loss_data_heterogeneity}
 
\end{figure}
\textbf{Testing the models on the global test set along the paths connecting client models.} 
 Figure \ref{fig:global_test_loss_data_heterogeneity} considers the same paths as that of Figure \ref{fig:Traversing_test_loss_VGG11_E1} (i.e., the linear path and the PolyChain path)  to connect two client models, and reports the loss of the models along both paths on the global test set.
The left and right sub-figures display the results of the global model trained from scratch and pre-training, respectively.
When data are more heterogeneous,  the loss gap between the two paths becomes greater.
For example, the gap of the case of  $\alpha=0.1$ is much larger than that of $\alpha=0.5$. 
Both sub-figures indicate that models along the PolyChain path exhibit good performance in the global test set when $a$ falls within the range of $[0.4, 0.6]$.
The sub-figure in the right-hand side of Figure \ref{fig:Traversing_test_loss_VGG11_E1} also shows that the models in the range  $ a \in [0.4, 0.6]$ exhibit good performance in the client test sets.
Upon comparing the left and right sub-figures, we observe a significant reduction in the loss gap between both paths due to the utilization of the pre-trained model.
Therefore, it appears that there are certain solutions to the FL objective (\ref{objective:FL}) that can maintain a low loss for both client test sets and the global test set, particularly when using a pre-trained model as the model initialization.
 It would be beneficial for future work to further explore how to find these solutions.

\begin{figure}[h]
	\centering  
	\subfigbottomskip=2pt 
	\subfigcapskip=0pt 
 	\subfigure{
		\includegraphics[width=0.46\linewidth]{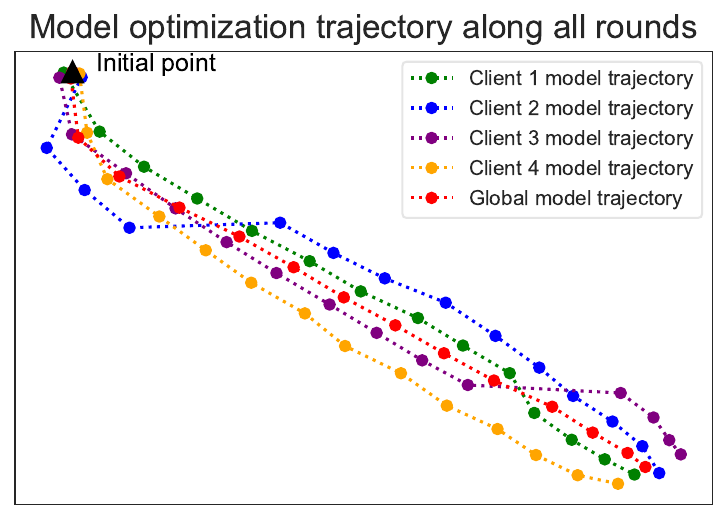}
  \label{fig:optimization_trajectory_CNN_E5_a}}
	\subfigure{
		\includegraphics[width=0.46\linewidth]{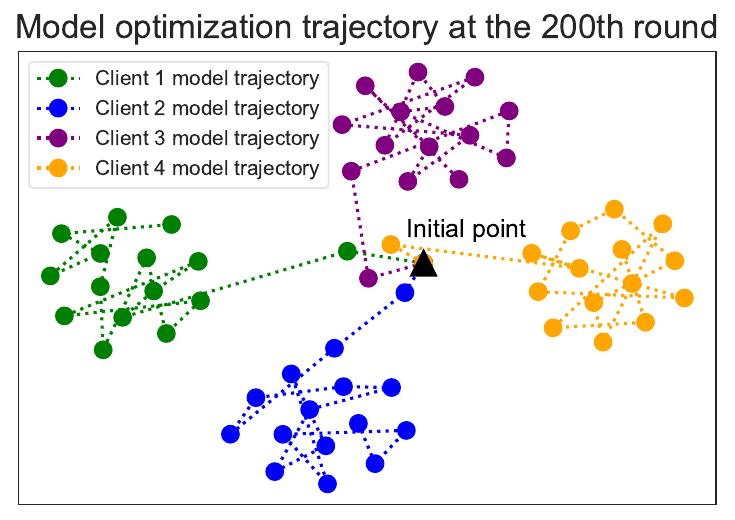}
    \label{fig:optimization_trajectory_CNN_E5_b}}
    \caption{Client models' optimization trajectory (shallow CNN with two convolutional layers) along the training rounds (\textit{left}) and along the local iteration within one round (\textit{right}).}
  \label{fig:optimization_trajectory_CNN_E5}
\end{figure}
\textbf{Optimization trajectory of shallow CNN.} 
In Figure \ref{fig:optimization_trajectory_CNN_E5}, we also visualize the optimization trajectory of client models to compare with the VGG11 trajectory illustrated in Figure \ref{fig:optimization_trajectory_VGG11_E1}. 
Similar to the observation of  Figure \ref{fig:optimization_trajectory_VGG11_E1}, the solutions found by client objectives vary under data heterogeneity even though they are close to each other in the parameter space.
Additionally, it is evident that during the training procedure, when utilizing shallow CNN, the distance between client solutions is regulated but is greater than that of VGG11.
This implies that data heterogeneity affects models with various dimension sizes differently, which is in line with the results of Theorem \ref{theorem:mode_connectivity_FL} and Figure \ref{fig:noise_iteration_comparision}.

\subsection{Additional Experiments in global-mode connectivity} 

\textbf{Connectivity landscape along the linear-interpolation path and the PolyChain path.}
\begin{figure}[th]
	\centering  
	\subfigbottomskip=2pt 
	\subfigcapskip=0pt 
 	\subfigure{
		\includegraphics[width=0.46\linewidth,height=90pt]{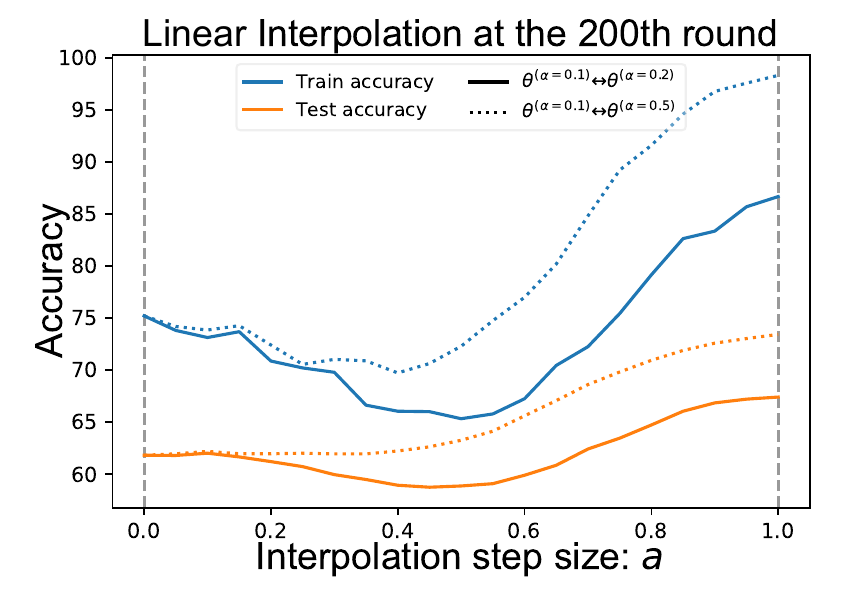}
  \label{fig:interpolation_a}}
	\subfigure{
		\includegraphics[width=0.46\linewidth,height=90pt]{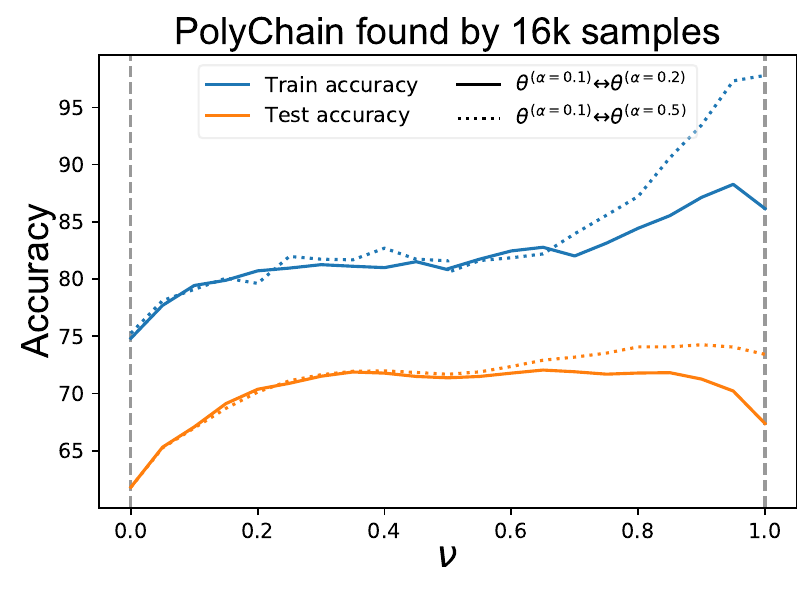}
    \label{fig:interpolation_b}}
    \caption{Linear-interpolation path (\textit{left}) and PolyChain path (\textit{right}) of global models.}
  \label{fig:global_models_interpolation}
\end{figure}
We take the classification accuracy as the metric and plot the connectivity landscape along the linear-interpolation path and the PolyChain path in Figure \ref{fig:global_models_interpolation}.
As additional data to Figure \ref{fig:global_model_acc_mode_connecitvity}, Figure \ref{fig:global_models_interpolation} displays comprehensive accuracy fluctuations along both paths.
As shown in the left sub-figure, when dealing with more heterogeneous data, such as in the case of $\alpha=0.1, 0.2$, both training and test barriers become higher, compared with $\alpha=0.1, 0.5$.
Meanwhile, since all global models are not far away from each other and are located in a common region, the maximal accuracy drop induced by the barriers is around $10\%$.
The right sub-figure shows that the PolyChain with one bend easily connects the global models from  $\alpha=0.1$ to $ 0.2$ and from  $\alpha=0.1$ to $ 0.5$, while all the models along the PolyChain keep no barrier.

\textbf{Neuron noise $\boldsymbol{n}_i$ insight into global-mode connectivity.}
\begin{figure}[th]
	\centering  
	\subfigbottomskip=2pt 
	\subfigcapskip=0pt 
 	\subfigure{
		\includegraphics[width=0.46\linewidth,height=90pt]{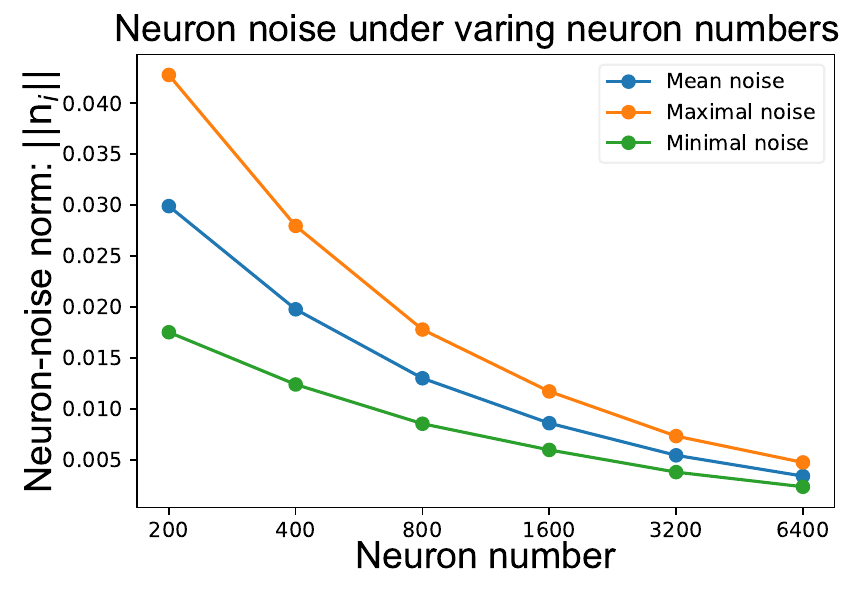}
  \label{fig:noise_iteration_comparision_a}}
	\subfigure{
		\includegraphics[width=0.46\linewidth,height=90pt]{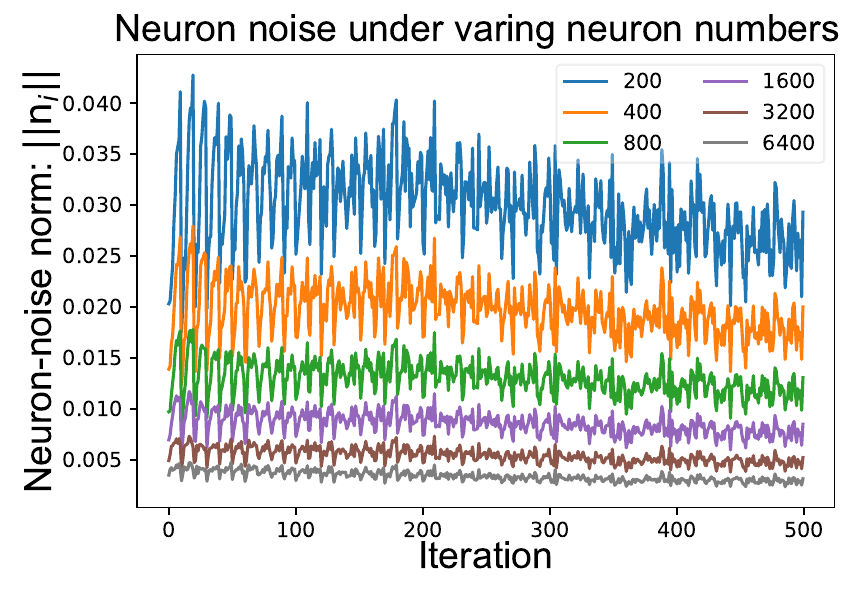}
    \label{fig:noise_iteration_comparision_b}}
    \caption{Neuron noise under varying neuron numbers  (\textit{left})  and its visualization along all the local iterations (\textit{right}).}
  \label{fig:noise_iteration_comparision}
\end{figure} 
We resort to the noise norm in (\ref{equation:noise}) to gain a better understanding of the relationship between the number of neurons ($N$) and the dropout error ($\varepsilon_{\mathrm{C}}$) and connectivity error ($\varepsilon_{\mathrm{D}}$).
As shown in Figure \ref{fig:noise_iteration_comparision}, the L2 norm of global-mode update noise is depicted under $\alpha=0.1$. 
This figure displays the noise norms for different neuron numbers and their visualization across all iterations. 
It also reports the mean, maximal, and minimal noise norms.
The left sub-figure demonstrates that as the number of neurons increases, the noise norm decreases.
When  $N=6400$, the gap between maximal and minimal noise norms becomes much smaller than that of $N=200$.
As shown in the right sub-figure, the noise levels are distinguishable for different numbers of neurons.
These results reveal that the wider models show smaller noise in their updates of FL, and then have lower dropout error ($\varepsilon_{\mathrm{C}}$) and connectivity error ($\varepsilon_{\mathrm{D}}$) in Figure \ref{fig:width_comparision}.

\section{Proof}
   The approximation in Lemma \ref{lemma:PDE_SGD} builds the connection from the nonlinear dynamics to the particle dynamics to the gradient descent to the  SGD under the data-heterogeneity noise. 
The approximation relies on the following model update:
   \begin{equation}  
    \boldsymbol{\theta}_i^{\tau+1}  =  (1-2\lambda s_\tau) \boldsymbol{\theta}_i^{\tau}+
             2 s_\tau\left(y^\tau-\hat{y}^{\tau}\right) \nabla_{\boldsymbol{\theta}_i} \sigma\left(\boldsymbol{x}_\tau ; \boldsymbol{\theta}_i^\tau\right)    +  \beta { \boldsymbol{n}}_i^\tau,
\end{equation}
where $\boldsymbol{\theta}_i$ denotes the $i$-th neuron parameter, $\lambda$ denotes the coefficient of regularizer in the objective (i.e., $\lambda=0$ without regularizers),  $s_\tau$ denotes the step size at the $\tau$-th iteration,  $\beta = \sqrt{2{s}^{\tau}\varepsilon/D}$ and $\beta { \boldsymbol{n}}_i^\tau$ denote the random noise at the $\tau$-th iteration.
 By Proposition 33, 35, 37 and 38 in \cite{SongMei2019meanfield}, when Assumptions \ref{assumption1} to \ref{assumption5} hold,  we have the following proposition:
\begin{proposition}\label{proposition:4approximation}
There exists a constant  $C$ (depending solely on the constants $C_i$ of assumptions 1-5), such that with probability at least $1-e^{-z^2}$, we have:
    \begin{align*}
\sup_{t \in[0, T]} & \left|\mathcal{L}_N\left(\overline{\boldsymbol{\theta}}^t\right)-\mathcal{L}\left(\rho_t\right)\right| \\  
\leq & C e^{C T} \frac{\sqrt{h(\alpha)}}{\sqrt{N}}[\sqrt{\log (N T)}+z], \\
\sup _{t \in[0, T]} & \left|\mathcal{L}_N\left(\underline{\boldsymbol{\theta}}^t\right)-\mathcal{L}_N\left(\overline{\boldsymbol{\theta}}^t\right)\right| \\
\leq & C e^{C T} \frac{1}{\sqrt{N}}[\sqrt{\log (N T)}+z], \\
\sup_{\tau \in[0, T / \varepsilon] \cap \mathbb{N}}& \left|\mathcal{L}_N\left(\underline{\boldsymbol{\theta}}^{\tau \varepsilon}\right)-\mathcal{L}_N\left(\tilde{\boldsymbol{\theta}}^\tau\right)\right| \\ 
\leq &  C e^{C T}[\sqrt{\log (N(T / \varepsilon \vee 1)}+z] \sqrt{\varepsilon h(\alpha)}, \\
\sup _{\tau \in[0, T / \varepsilon] \cap \mathbb{N}}& \left|\mathcal{L}_N\left(\tilde{\boldsymbol{\theta}}^\tau\right)-\mathcal{L}_N\left(\boldsymbol{\theta}^\tau\right)\right| \\ 
\leq &  C e^{C T} \sqrt{T \varepsilon}[\sqrt{D+\log N}+z],
\end{align*}
where $\overline{\boldsymbol{\theta}}, \underline{\boldsymbol{\theta}}^t, \tilde{\boldsymbol{\theta}}^\tau$, and $ {\boldsymbol{\theta}}^\tau$ denote the solutions of the nonlinear dynamics, particle dynamics, gradient descent and  SGD, respectively.
\end{proposition}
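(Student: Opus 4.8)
The plan is to prove each of the four inequalities separately and then combine them by the triangle inequality, mirroring the four-stage decomposition of the mean-field analysis in \cite{SongMei2019meanfield} but carrying the data-heterogeneity scale $\sqrt{h(\alpha)}$ through every estimate. The first step is to recast the global-mode update (\ref{equation:update_FedAvg_onestep}) into the canonical form used there: the drift $2 s_\tau (y^\tau - \bar{y}^\tau)\nabla_{\boldsymbol{\theta}_i}\sigma(\boldsymbol{x}_\tau;\boldsymbol{\theta}_i^\tau)$ is exactly the stochastic-gradient term of a two-layer network, while the heterogeneity term $2 s^\tau \boldsymbol{n}_i^\tau$ is, by Assumption \ref{assumption5}, an isotropic Gaussian perturbation of scale $\sqrt{\beta h(\alpha)}$ that is independent across neurons $i$ and iterations $\tau$. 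This is the key reduction: the FL dynamics becomes a centralized mean-field dynamics whose diffusion coefficient $\beta$ is inflated to $\beta h(\alpha)$, so that every place where a Brownian increment enters the four propositions (33, 35, 37, 38 of \cite{SongMei2019meanfield}) is rescaled by $\sqrt{h(\alpha)}$.

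With this reduction in hand, I would handle the four bounds as follows. For the nonlinear-dynamics-versus-PDE bound, the $N$ particles $\overline{\boldsymbol{\theta}}_i^t$ are i.i.d.\ draws from $\rho_t$, so $\mathcal{L}_N(\overline{\boldsymbol{\theta}}^t) - \mathcal{L}(\rho_t)$ is a centered empirical average of bounded, sub-Gaussian summands (Assumptions \ref{assumption2}, \ref{assumption4}); a uniform-in-$t$ concentration via a chaining/grid argument gives the $\frac{\sqrt{h(\alpha)}}{\sqrt{N}}[\sqrt{\log(NT)}+z]$ rate, the factor $\sqrt{h(\alpha)}$ coming from the inflated variance of the summands. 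For the particle-versus-nonlinear bound, I would use a synchronous coupling that drives both systems with the same noise realizations; the Lipschitz assumptions \ref{assumption3}, \ref{assumption4} on $v$ and $u$ together with a Gr\"onwall estimate yield the $e^{CT}$ prefactor, and because the shared noise cancels in the difference this bound carries no $\sqrt{h(\alpha)}$, matching the statement. For the gradient-descent-versus-particle bound, which is the Euler discretization of the continuous particle flow, the per-step diffusion increment has scale $\sqrt{\varepsilon\,\beta h(\alpha)}$; summing the local errors and applying a maximal inequality produces the $\sqrt{\varepsilon h(\alpha)}$ rate. Finally, for the SGD-versus-gradient-descent bound, one-pass sampling (the i.i.d.\ assumption on $\{(\boldsymbol{x}_i,y_i)\}$) makes the stochastic gradient an unbiased estimator of the full gradient, so the difference is a martingale whose increments do not involve the heterogeneity noise, which is common to both discretizations and cancels; an Azuma/Freedman concentration over $T/\varepsilon$ steps gives the $\sqrt{T\varepsilon}[\sqrt{D+\log N}+z]$ rate, with no $h(\alpha)$ dependence.

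I would close by taking a union bound over the four events (each of probability at least $1-e^{-z^2}$, absorbed into the constant) and invoking the triangle inequality along the chain $\rho_t \to \overline{\boldsymbol{\theta}} \to \underline{\boldsymbol{\theta}} \to \tilde{\boldsymbol{\theta}} \to \boldsymbol{\theta}$; substituting $\beta = 2 s^\tau \varepsilon / D$ and using $h(\alpha)\le C_5$, $T\ge 1$, $\varepsilon\le 1$ collapses the separate logarithmic factors into the displayed terms, which also recovers Lemma \ref{lemma:PDE_SGD}. The main obstacle I expect is justifying the reduction in the first step rather than any single bound: the genuine FL noise in (\ref{equation:noise}) is a \emph{structured} object built from the gradient drift and output bias, correlated across clients through the shared weighted average $\bar{\boldsymbol{\theta}}_i$, whereas Assumption \ref{assumption5} replaces it with an idealized isotropic Gaussian of matched scale. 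The delicate part is therefore to argue that, for the purpose of the loss-level approximation, only the second-moment scale $\beta h(\alpha)$ of the noise enters the Mei--Montanari--Nguyen estimates, so that the structured and Gaussian noises are interchangeable up to constants; once that is established, the four propositions of \cite{SongMei2019meanfield} transfer essentially verbatim under $\beta \mapsto \beta h(\alpha)$.
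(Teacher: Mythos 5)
Your proposal takes essentially the same route as the paper: the paper proves this proposition simply by invoking Propositions 33, 35, 37 and 38 of \cite{SongMei2019meanfield} under Assumptions 1--5, noting that the only change is that the data-heterogeneity scale $\sqrt{h(\alpha)}$ is tracked explicitly (via the inflated Gaussian noise of Assumption 5, i.e., $\beta \mapsto \beta h(\alpha)$) rather than absorbed into the constant $C$ --- exactly your reduction. Your four-stage sketch and the placement of the $\sqrt{h(\alpha)}$ factors match the paper's statement, and your closing concern about structured versus Gaussian noise is dissolved within the paper's formal framework, since Assumption 5 posits the Gaussian form outright.
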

In Proposition \ref{proposition:4approximation}, the main distinction from its counterpart in \cite{SongMei2019meanfield} is that it accounts for the impact of data heterogeneity in the model update, i.e.,   $\sqrt{h(\alpha)}$ is not embraced into the constant $C$. 
The proof for Proposition \ref{proposition:4approximation} aligns with the proof of its counterpart in \cite{SongMei2019meanfield}, and see \cite{SongMei2019meanfield} for the details.

\begin{lemma}\label{lemma:distance_bound_PDE}
  (From Lemma 31 in \cite{SongMei2019meanfield}).  There exists a constant $K$, such that with probability at least $1-e^{-z^2}$,
$$
\begin{aligned}
\sup _{i \leq N} \sup _{\tau \in[0, T / \varepsilon] \cap \mathbb{N}} \sup _{u \in[0, \varepsilon]} & \left\|\overline{\boldsymbol{\theta}}_i^{\tau \varepsilon+u}-\overline{\boldsymbol{\theta}}_i^{\tau \varepsilon}\right\|_2
\\ \leq & C e^{C T}[\sqrt{\log (N(T / \varepsilon \vee 1))}+z] \sqrt{\varepsilon},
\end{aligned}
$$
 where $\overline{\boldsymbol{\theta}}_i^{\tau \varepsilon+u}$ and $\overline{\boldsymbol{\theta}}_i^{\tau \varepsilon}$ denote the solutions of the PDE in Lemma \ref{lemma:PDE_SGD} at time $\tau\varepsilon+u$ and $\tau \varepsilon$, respectively.
\end{lemma}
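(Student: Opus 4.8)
The plan is to read Lemma~\ref{lemma:distance_bound_PDE} as a short-time modulus-of-continuity estimate for the nonlinear (mean-field) dynamics $\overline{\boldsymbol{\theta}}_i^t$, and to prove it exactly along the lines of Lemma~31 of \cite{SongMei2019meanfield}. In the noisy mean-field regime underlying Lemma~\ref{lemma:PDE_SGD} and Proposition~\ref{proposition:4approximation}, each coordinate trajectory solves a McKean--Vlasov stochastic differential equation whose drift $b_i^t$ is the mean-field gradient built from $v$ and $u$ of Assumption~\ref{assumption3}, and whose per-coordinate diffusion scale is $\propto 1/\sqrt{D}$ (reflecting the noise scaling $\beta\propto\sqrt{\varepsilon/D}$ of the discrete update in the proof). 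Writing the increment as an integral, $\overline{\boldsymbol{\theta}}_i^{\tau\varepsilon+u}-\overline{\boldsymbol{\theta}}_i^{\tau\varepsilon}=\int_{\tau\varepsilon}^{\tau\varepsilon+u} b_i^s\,ds+\sigma_0\,(W_i^{\tau\varepsilon+u}-W_i^{\tau\varepsilon})$, I would bound the drift and diffusion contributions separately and then take the three suprema via an a~priori bound plus a union bound.

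For the drift term, first I would establish a uniform-in-time, uniform-in-particle a~priori bound $\sup_{i\le N}\sup_{t\le T}\|\overline{\boldsymbol{\theta}}_i^t\|\le C e^{CT}(\cdots)$ using the boundedness and Lipschitzness of the gradients of $v$ and $u$ (Assumptions~\ref{assumption3}--\ref{assumption4}) together with the sub-Gaussian initialization and a Gronwall argument; this is where the $e^{CT}$ prefactor originates. Feeding this back, the drift velocity $\|b_i^s\|$ is bounded by $C e^{CT}$, so $\bigl\|\int_{\tau\varepsilon}^{\tau\varepsilon+u} b_i^s\,ds\bigr\|\le C e^{CT}u\le C e^{CT}\sqrt{\varepsilon}$, using $u\le\varepsilon\le1$. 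Thus the drift is already dominated by the target rate and contributes no logarithmic or probabilistic factor.

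For the diffusion term I would use that $\sigma_0\,(W_i^{\tau\varepsilon+u}-W_i^{\tau\varepsilon})$ is, for fixed $u$, a $D$-dimensional Gaussian whose Euclidean norm concentrates at scale $\sqrt{u}$: crucially, the per-coordinate scale $\sigma_0\propto1/\sqrt{D}$ cancels the $\sqrt{D}$ from the norm of a $D$-dimensional Gaussian, which is exactly why the final bound is dimension-free. To pass from the fixed endpoint to $\sup_{u\in[0,\varepsilon]}$, I would invoke a maximal inequality for Brownian motion (reflection principle or Doob's exponential inequality), so that the running maximum keeps the same sub-Gaussian scale $\sqrt{\varepsilon}$. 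Finally I would apply a union bound over the at most $N(T/\varepsilon\vee1)$ index pairs $(i,\tau)$; a standard maximum-of-sub-Gaussians estimate then yields the factor $\sqrt{\varepsilon}\bigl(\sqrt{\log(N(T/\varepsilon\vee1))}+z\bigr)$ with probability at least $1-e^{-z^2}$. Adding the drift bound absorbs into the same constant $C e^{CT}$ and produces the stated inequality.

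The hard part will be the two intertwined suprema: controlling the running maximum over the continuum window $u\in[0,\varepsilon]$ simultaneously with the discrete union bound over all $N\cdot T/\varepsilon$ pairs, while tracking constants so that the dimension $D$ genuinely cancels rather than surviving as a $\sqrt{D}$ factor. Because the lemma concerns only the nonlinear dynamics itself---no finite-$N$ interaction (propagation-of-chaos) error and no discretization error appear here---the remaining obstacles are mild; the main verification is that Assumptions~\ref{assumption1}--\ref{assumption4} supply the bounded, Lipschitz drift needed for the Gronwall a~priori estimate, after which the argument is precisely that of Lemma~31 of \cite{SongMei2019meanfield}.
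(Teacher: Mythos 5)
Your reconstruction is correct and is essentially the same argument as the paper's, which in fact offers no independent proof of this statement but imports it verbatim as Lemma~31 of \cite{SongMei2019meanfield}; that lemma's proof is precisely your drift--diffusion decomposition, with the per-coordinate noise scale $\propto 1/\sqrt{D}$ cancelling the $\sqrt{D}$ from the norm of the $D$-dimensional Gaussian increment, a Brownian maximal inequality over each window $[\tau\varepsilon,\tau\varepsilon+\varepsilon]$, and a union bound over the $N(T/\varepsilon\vee 1)$ pairs yielding the $\sqrt{\log(N(T/\varepsilon\vee 1))}+z$ factor. One minor simplification you could make: since Assumption~\ref{assumption3} bounds the gradients of $v$ and $u$ uniformly by $C_3$, the mean-field drift is bounded outright, so your Gronwall a~priori estimate is superfluous and the $e^{CT}$ prefactor is merely slack kept for uniformity with the companion approximation bounds rather than something this increment estimate requires.
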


\subsection{Proof of Theorem \ref{theorem:dropout_stability_FL}} 
\begin{proof}
In the following proof, we take $C$ to accommodate all constants that depend solely on the constants $C_i$ of assumptions 1-4.
According to Definition \ref{def:dropout_stability}, we have:
\begin{equation}
\begin{aligned}
  \sup_{\tau\in[0,T/\varepsilon]\cap \mathbb{N}} &  \left|\mathcal{L}_N(\boldsymbol{\theta}^\tau)-\mathcal{L}_{|\mathcal{A}|}\left(\boldsymbol{\theta}_{\mathrm{S}}^\tau\right)\right| \\
  \leq  & \left|\mathcal{L}_N(\boldsymbol{\theta}^\tau)-\mathcal{L}\left(\rho_{\tau\varepsilon}\right)\right| 
     + \left|\mathcal{L}_{|\mathcal{A}|}(\boldsymbol{\theta}_{\mathrm{S}}^\tau)-\mathcal{L}\left(\rho_{\tau\varepsilon}\right)\right| 
     \\ 
     \leq   & \left|\mathcal{L}_N(\boldsymbol{\theta}^\tau)-\mathcal{L}\left(\rho_{\tau\varepsilon}\right)\right| 
     +\left|\mathcal{L}_{|\mathcal{A}|}(\boldsymbol{\theta}_{\mathrm{S}}^\tau)-\mathcal{L}_{|\mathcal{A}|}(\Bar{\boldsymbol{\theta}}_{\mathrm{S}}^{\tau\varepsilon})\right| \\
      &+ \left| \mathcal{L}_{|\mathcal{A}|}(\Bar{\boldsymbol{\theta}}_{\mathrm{S}}^{\tau\varepsilon})-\mathcal{L}\left(\rho_{\tau\varepsilon}\right)\right|,
\end{aligned}
\end{equation}
where $\Bar{\boldsymbol{\theta}}_{\mathrm{S}}^{\tau\varepsilon}$ denotes the dropout model  containing the first $\mathcal{|A|}$ non-zero elements (i.e., non-dropout elements) of $\Bar{\boldsymbol{\theta}}^{\tau\varepsilon}$.
Here, $\Bar{\boldsymbol{\theta}}^{\tau\varepsilon}$ is the PDE solution of the distributional dynamic at time $\tau\varepsilon$. 

\textbf{For the first term,}    we  take   Lemma \ref{lemma:PDE_SGD} and directly obtain:
\begin{equation}\label{proof:first_term}
    \begin{aligned}
    \sup_{\tau\in[0,T/\varepsilon]\cap \mathbb{N}} &\left|\mathcal{L}_N(\boldsymbol{\theta}^\tau)-\mathcal{L}\left(\rho_{\tau\varepsilon}\right)\right|    \\
\leq &C e^{C T } \sqrt{h(\alpha)}\left(\frac{\sqrt{\log  N }+z}{\sqrt{N}}+\sqrt{\varepsilon}(\sqrt{D+\log  N }+z)\right).
\end{aligned}
\end{equation}

  \textbf{For the second term},  we compute its upper bound   based on the maximal output gap between the $i$-th neuron of $ \boldsymbol{\theta}_{\mathrm{S}}^\tau $ and the $j$-th neuron of $ \overline{\boldsymbol{\theta}}_{\mathrm{S}}^{\tau \varepsilon} $, which can be represented as: 
 \begin{equation}
\begin{aligned}
& \left|\mathcal{L}_{|\mathcal{A}|}\left(\boldsymbol{\theta}_{\mathrm{S}}^\tau\right)-\mathcal{L}_{|\mathcal{A}|} (\overline{\boldsymbol{\theta}}_{\mathrm{S}}^{\tau \varepsilon} )\right|    \\
 \leq & 2 \max _{i \in \mathcal{A}}\left|  \mathbb{E}\left\{y \sigma\left(\boldsymbol{x}, \boldsymbol{\theta}_i^k\right)\right\}-  \mathbb{E}\left\{y \sigma (\boldsymbol{x}, \overline{\boldsymbol{\theta}}_i^{\tau \varepsilon} )\right\}\right| \\
&  +\max _{i, j \in \mathcal{A}}\left|    \mathbb{E}\left\{\sigma\left(\boldsymbol{x}, \boldsymbol{\theta}_i^k\right) \sigma\left(\boldsymbol{x}, \boldsymbol{\theta}_j^k\right)\right\}-   \mathbb{E}\left\{\sigma (\boldsymbol{x}, \overline{\boldsymbol{\theta}}_i^{\tau \varepsilon} t) \sigma (\boldsymbol{x}, \overline{\boldsymbol{\theta}}_j^{\tau \varepsilon} )\right\}\right| \\
 \leq&   4C_2  \max _{i \in \mathcal{[|A|]}}\left\|\boldsymbol{\theta}_i^k-\overline{\boldsymbol{\theta}}_i^{\tau \varepsilon}\right\|_2 \\
 \leq & 4C_2 \max _{i \in[N]}\left\|\boldsymbol{\theta}_i^k-\overline{\boldsymbol{\theta}}_i^{\tau \varepsilon}\right\|_2,
\end{aligned}
 \end{equation}
where   the second inequality follows that $y, \sigma(\cdot)$ and the gradient of $\sigma(\cdot)$ are bounded. 
 
Furthermore, we take the sum of all inequalities in Proposition \ref{proposition:4approximation} and obtain:
 \begin{equation}
\begin{aligned}
&\sup _{\tau \in[T / \varepsilon]} \max _{i \in[N]}\left\|\boldsymbol{\theta}_i^k-\overline{\boldsymbol{\theta}}_i^{\tau \varepsilon}\right\|_2 \\
\leq &C  e^{C T}\sqrt{h(\alpha)}\left(\frac{\sqrt{\log N}+z}{\sqrt{N}}+\sqrt{\varepsilon}(\sqrt{D+\log  N }+z)\right),
\end{aligned}
 \end{equation}
  with probability at least $1-e^{-z^2}$.
  Consequently, we have: with probability at least $1-e^{-z^2}$,
 \begin{equation}\label{proof:second_term}
\begin{aligned}
&\sup _{\tau \in[T /\varepsilon]}\left|\mathcal{L}_{|\mathcal{A}|}\left(\boldsymbol{\theta}_{\mathrm{S}}^\tau\right)-\mathcal{L}_{|\mathcal{A}|}\left(\overline{\boldsymbol{\theta}}_{\mathrm{S}}^{\tau \varepsilon}\right)\right| \\
&\leq C e^{C  T}\sqrt{h(\alpha)}\left(\frac{\sqrt{\log N}+z}{\sqrt{N}}+\sqrt{\varepsilon}(\sqrt{D+\log  N }+z)\right).
\end{aligned}
 \end{equation}

\textbf{For the third term}, we follow the triangle inequality and have:
\begin{equation}\label{proof:third_term_triangle_inequality}
    \begin{aligned}
    \sup_{\tau \in [0,T/\varepsilon] \cap \mathbb{N}}   & \left|\mathcal{L}_{| \mathcal{A}|}\left(\overline{\boldsymbol{\theta}}_{\mathrm{S}}^{\tau\varepsilon}\right)-\mathcal{L}\left(\rho_{\tau\varepsilon}\right)\right| \\
        \leq &\left|\mathcal{L}_{|\mathcal{A}|}\left(\overline{\boldsymbol{\theta}}_{\mathrm{S}}^{\tau\varepsilon}\right)-\mathbb{E}_{\rho_0}\left\{\mathcal{L}_{|\mathcal{A}|}\left(\overline{\boldsymbol{\theta}}_{\mathrm{S}}^{\tau\varepsilon}\right)\right\}\right|\\
       & +\left|\mathbb{E}_{\rho_0}\left\{\mathcal{L}_{| \mathcal{A}|}\left(\overline{\boldsymbol{\theta}}_{\mathrm{S}}^{\tau\varepsilon}\right)\right\}-\mathcal{L}\left(\rho_{k \mathrm{a}}\right)\right|,
    \end{aligned}
\end{equation}
where $\mathbb{E}_{\rho_0}\{\cdot\}$  takes the expectation on $\boldsymbol{\theta}_i^0 \sim \rho_0$. 
For the  first term of (\ref{proof:third_term_triangle_inequality}),
we consider  the mean-square loss in this work and have:
\begin{equation}
    \begin{aligned}
\mathcal{L}_{|\mathcal{A}|}\left(\boldsymbol{\theta}_{\mathrm{S}}\right)=\mathbb{E}_{(x, y) \sim \mathbb{P}}\left\{\left(y-\frac{1}{|\mathcal{A}|} \sum_{i=1}^{| \mathcal{A}|} \sigma\left(\boldsymbol{x}, \boldsymbol{\theta}_i\right)\right)^2\right\},
    \end{aligned}
\end{equation}
where  $\mathbb{E}_{(x, y)}$  takes the expectation on   $(\boldsymbol{x}, y) \sim \mathbb{P}$. 
According to Lemma \ref{lemma:PDE_SGD}, we have $\left\{\boldsymbol{\theta}_i^{\tau\varepsilon}\right\}_{i=1}^{| \mathcal{A}|} \sim \rho_{\tau\varepsilon}$ under the PDE approximation, and  rewrite the second term of (\ref{proof:third_term_triangle_inequality}) as:
\begin{equation} \label{proof:third_term_2}
    \begin{aligned}
& \sup_{\tau \in [0,T/\varepsilon] \cap \mathbb{N}}\left|\mathbb{E}_{\rho_0}\left\{\mathcal{L}_{| \mathcal{A}|}\left(\overline{\boldsymbol{\theta}}_{\mathrm{S}}^{\tau\varepsilon}\right)\right\}-\mathcal{L}\left(\rho_{k \mathrm{a}}\right)\right| \\
= &\sup _{\tau \in [0,T/\varepsilon]} \frac{1}{|\mathcal{A}|}\left|\int \mathbb{E}_{(x, y)}\left\{\left(\sigma(\boldsymbol{x}, \boldsymbol{\theta})\right)^2\right\} \rho_{\tau\varepsilon}(\mathrm{d} \boldsymbol{\theta}) \right. \\
  &  - \left.  \int  \mathbb{E}_{(x, y)}\left\{\sigma\left(\boldsymbol{x}, \boldsymbol{\theta}_1\right) \sigma\left(\boldsymbol{x}, \boldsymbol{\theta}_2\right)\right\} \rho_{\tau\varepsilon}\left(\mathrm{d} \boldsymbol{\theta}_1\right) \rho_{\tau\varepsilon}\left(\mathrm{d} \boldsymbol{\theta}_2\right)\right| \\
\leq  &   \frac{C}{|\mathcal{A}|},
    \end{aligned}
\end{equation}
 where the inequality is because  the activation function $\sigma(\cdot)$ is bounded by $C_2$ according to Assumption \ref{assumption2}.
That is, 
denoting by $\boldsymbol{\theta}$ and $\boldsymbol{\theta}^{\prime}$ be two parameters that differ only in the $i$-th component, i.e., $\boldsymbol{\theta}=\left(\boldsymbol{\theta}_1, \ldots, \boldsymbol{\theta}_{i,}, \ldots, \boldsymbol{\theta}_{| \mathcal{A}|}\right)$ and $\boldsymbol{\theta}^{\prime}=$ $\left(\boldsymbol{\theta}_1, \ldots, \boldsymbol{\theta}_i^{\prime}, \ldots, \boldsymbol{\theta}_{| \mathcal{A}|}\right)$, we have:
$\left|\mathcal{L}_{|\mathcal{A}|}(\boldsymbol{\theta})-\mathcal{L}_{|\mathcal{A}|}\left(\boldsymbol{\theta}^{\prime}\right)\right| \leq {C}/{|\mathcal{A}|}$.

With McDiarmid's inequality,  we have:
\begin{equation}
    \begin{aligned}
\mathbb{P}\left(\left|\mathcal{L}_{| \mathcal{A}|}\left(\bar{\boldsymbol{\theta}}_{\rm S}^t\right)-\mathbb{E}_{\rho_0}\left\{\mathcal{L}_{| \mathcal{A}|}\left(\bar{\boldsymbol{\theta}}_{\rm S}^t\right)\right\}\right|>\delta\right) \leq \exp \left(-\frac{|\mathcal{A}| \delta^2}{C}\right).
    \end{aligned}
\end{equation}

Furthermore, we have the following increment bound for time $t, h \geq 0$:
\begin{equation}
\begin{aligned}
& \left| | \mathcal{L}_{| \mathcal{A}|}\left(\overline{\boldsymbol{\theta}}_{\rm S}^{t+h}\right)-\mathbb{E}_{\rho_0} \left\{ \mathcal{L}_{| \mathcal{A}|}\left(\overline{\boldsymbol{\theta}}_{\rm S}^{t+h}\right)\right\}| \right. \\
& - \left. | \mathcal{L}_{| \mathcal{A}|}\left(\overline{\boldsymbol{\theta}}_{\rm S}^t\right)-\mathbb{E}_{\rho_0} \left\{ \mathcal{L}_{| \mathcal{A}|}\left(\overline{\boldsymbol{\theta}}_{\rm S}^t\right)\right\}|\right| \\
\leq & \left|\mathcal{L}_{| \mathcal{A}|}\left(\overline{\boldsymbol{\theta}}_{\rm S}^{t+h}\right)-\mathcal{L}_{| \mathcal{A}|}\left(\overline{\boldsymbol{\theta}}_{\rm S}^t\right)\right| \\ 
& +\left|\mathbb{E}_{\rho_0} \left\{ \mathcal{L}_{| \mathcal{A}|}\left(\overline{\boldsymbol{\theta}}_{\rm S}^{t+h}\right)\right\}-\mathbb{E}_{\rho_0} \left\{ \mathcal{L}_{| \mathcal{A}|}\left(\overline{\boldsymbol{\theta}}_{\rm S}^t\right)\right\}\right| \\
\leq & C\left[\sup _{i \in[N]}\left\|\overline{\boldsymbol{\theta}}_i^{t+h}-\overline{\boldsymbol{\theta}}_i^t\right\|_2+\mathbb{E}\left[\left\|\overline{\boldsymbol{\theta}}_j^{t+h}-\overline{\boldsymbol{\theta}}_j^t\right\|_2\right]\right] .
\end{aligned}
\end{equation}

Using Lemma \ref{lemma:distance_bound_PDE}, we get:
\begin{equation}
\begin{aligned}
  &  \sup _{\tau \in[0, T / \varepsilon] \cap \mathbb{N}} \sup _{u \in[0, \varepsilon]}  \left|| \mathcal{L}_{| \mathcal{A}|}\left(\overline{\boldsymbol{\theta}}_{\rm S}^{\tau \varepsilon+u}\right)-\mathbb{E}_{\rho_0} \left\{ \mathcal{L}_{| \mathcal{A}|}\left(\overline{\boldsymbol{\theta}}_{\rm S}^{\tau \varepsilon+u}\right)\right\}|  \right. \\
 & \quad \quad \quad \quad \quad\quad \quad \quad-\left.| \mathcal{L}_{| \mathcal{A}|}\left(\overline{\boldsymbol{\theta}}_{\rm S}^{\tau \varepsilon}\right)-\mathbb{E}_{\rho_0} \left\{\mathcal{L}_{| \mathcal{A}|}\left(\overline{\boldsymbol{\theta}}_{\rm S}^{\tau \varepsilon}\right)\right\}|\right| \\
\leq    &    C e^{C T}[\sqrt{\log |\mathcal{A}|(T / \varepsilon \vee 1)}+z] \sqrt{\varepsilon h(\alpha)}, \\
\end{aligned}
\end{equation}
with probability at least $1-e^{-z^2}$.

Here, we take a union bound over $s \in \varepsilon\{0,1, \ldots,\lfloor T / \varepsilon\rfloor\}$ and bound  the variation inside the grid intervals.
Then,  with McDiarmid’s inequality, we get:
\begin{equation}\label{proof:third_term_1}
\begin{aligned}
& \mathbb{P}\left(\sup _{t \in[0, T]} |\mathcal{L}_{| \mathcal{A}|}\left(\overline{\boldsymbol{\theta}}_{\rm S}^t\right)-\mathbb{E}_{\rho_0} \mathcal{L}_{| \mathcal{A}|}\left(\overline{\boldsymbol{\theta}}_{\rm S}^t\right) | \right.\\
& \quad \quad \geq \left. \delta+C e^{C T}[\sqrt{\log \mathcal{|A|}(T / \varepsilon \vee 1)}+z] \sqrt{\varepsilon h(\alpha)}\right) \\
\leq & (T / \varepsilon) \exp \left\{-\mathcal{|A|} \delta^2 / C\right\}+e^{-z^2}
\end{aligned}
\end{equation}
We take $\varepsilon=1 / \mathcal{|A|}$ and $\delta=C[\sqrt{\log (\mathcal{|A|} T)}+z] / \sqrt{\mathcal{|A|}}$ in (\ref{proof:third_term_1}) and combine it with (\ref{proof:third_term_1}), and get:
\begin{equation}\label{proof:third_term}
    \begin{aligned}
     &\sup_{\tau \in [0,T/\varepsilon] \cap \mathbb{N}}    \left|\mathcal{L}_{| \mathcal{A}|}\left(\overline{\boldsymbol{\theta}}_{\mathrm{S}}^{\tau\varepsilon}\right)-\mathcal{L}\left(\rho_{\tau\varepsilon}\right)\right| \\
     \leq & C e^{C T}\frac{(\sqrt{\log |\mathcal{A}|}+z)\sqrt{h(\alpha)}}{\sqrt{|\mathcal{A}|}}
    \end{aligned}
\end{equation}
with probability at least $1-e^{-z^2}$.

we  summarize (\ref{proof:first_term}), (\ref{proof:second_term}) and (\ref{proof:third_term}) and have:
\begin{equation}
\begin{aligned}
 &   \sup_{\tau\in[0,T/\varepsilon]\cap \mathbb{N}}  \left|\mathcal{L}_N(\boldsymbol{\theta}^\tau)-\mathcal{L}_{|\mathcal{A}|}\left(\boldsymbol{\theta}_{\mathrm{S}}^\tau\right)\right| \\
\leq  &  C e^{C T}\sqrt{h(\alpha)}\left(\frac{\sqrt{\log |\mathcal{A}|}+z}{\sqrt{|\mathcal{A}|}}+\sqrt{\alpha}(\sqrt{D+\log N}+z)\right),
\end{aligned}
\end{equation}
with probability at least $1-e^{-z^2}$.
Therefore, the global model $\boldsymbol{\theta}$ obtained from the FL under  data heterogeneity $h(\alpha)$ is $\varepsilon_D$-dropout
stable with $\varepsilon_D$ equal to $C e^{C T}\sqrt{h(\alpha)}[({\sqrt{\log |\mathcal{A}|}+z})/{\sqrt{|\mathcal{A}|}}+\sqrt{\alpha}(\sqrt{D+\log N}+z)]$.
    
\end{proof}

\subsection{Proof of Theorem \ref{theorem:mode_connectivity_FL}} 
The proof of Theorem \ref{theorem:mode_connectivity_FL} is obtained by combining Theorem \ref{theorem:dropout_stability_FL} with the following lemma, which is based on \cite{Kuditipudi2019Explaining_Landscape,shevchenko20aLandscape}.

\begin{lemma}
    (Dropout networks obtained from FL under varying data heterogeneity have connectivity). Consider a two-layer neural network with $N$ neurons, as in (\ref{equation:twolayer_network_function}).
    Given $\mathcal{A}=[N / 2]$, let $\boldsymbol{\theta}$ and $\boldsymbol{\theta}^{\prime}$ be  solved under data heterogeneity $h(\alpha)$ and $h(\alpha^\prime)$, which  have dropout stability as in Definition \ref{def:dropout_stability}. 
    Then, $\boldsymbol{\theta}$ and $\boldsymbol{\theta}^{\prime}$ are $\varepsilon$-connected as in Definition \ref{def:model_connectivity}. 
    Furthermore, the path connecting $\boldsymbol{\theta}$ with $\boldsymbol{\theta}^{\prime}$ consists of 7 line segments.
\end{lemma}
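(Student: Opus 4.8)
The plan is to follow the template of \cite{Kuditipudi2019Explaining_Landscape,shevchenko20aLandscape} and adapt it to the FL setting. Since Theorem \ref{theorem:dropout_stability_FL} already certifies that both $\boldsymbol{\theta}$ and $\boldsymbol{\theta}^\prime$ are $\varepsilon_{\mathrm{D}}$-dropout stable, with the two dropout errors scaling like $\sqrt{h(\alpha)}$ and $\sqrt{h(\alpha^\prime)}$ respectively, the connectivity construction itself need not reason about data heterogeneity at all. I would build an explicit piecewise-linear path and bound its loss using only dropout stability together with the convexity of the mean-square loss in the network output. The entire heterogeneity dependence then enters through the endpoint dropout errors, and taking the worse of the two along the concatenated path produces the $\max(\sqrt{h(\alpha)},\sqrt{h(\alpha^\prime)})$ factor that appears in Theorem \ref{theorem:mode_connectivity_FL}.

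The single workhorse is a convexity observation. For the two-layer architecture (\ref{equation:twolayer_network_function}) one can choose a linear segment in parameter space along which the realized output is an \emph{exact} convex combination $(1-t)f_{\boldsymbol{\theta}}+t\,f_{\boldsymbol{\theta}_{\mathrm{S}}}$ of the outputs of two fixed networks: concretely, interpolate the per-neuron output coefficients so that the $\mathcal{A}$-neurons are scaled up while the complementary neurons are scaled down to zero. Because the mean-square loss is convex in the prediction, the loss along such a segment obeys $\mathcal{L}(\pi_{\boldsymbol{\theta}}(t))\le\max(\mathcal{L}(\boldsymbol{\theta}),\mathcal{L}(\boldsymbol{\theta}_{\mathrm{S}}))\le\mathcal{L}(\boldsymbol{\theta})+\varepsilon_{\mathrm{D}}$, where the last step is precisely Definition \ref{def:dropout_stability}. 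This connects $\boldsymbol{\theta}$ to the full-width realization $\boldsymbol{\theta}_{\mathrm{S}}$ of its half-width ($\mathcal{A}=[N/2]$) dropout network at a cost of at most $\varepsilon_{\mathrm{D}}$, and symmetrically for $\boldsymbol{\theta}^\prime$.

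With both endpoints reduced to half-width networks, the remaining segments rearrange and then swap them. A neuron whose output coefficient is zero is inert, so its incoming weights may be translated arbitrarily along a linear segment with no change in loss; I use these free moves to place the active neurons of $\boldsymbol{\theta}_{\mathrm{S}}$ and of $\boldsymbol{\theta}^\prime_{\mathrm{S}}$ into disjoint halves of the $N$ slots. A final output-coefficient interpolation then hands the function off from $\boldsymbol{\theta}_{\mathrm{S}}$ to $\boldsymbol{\theta}^\prime_{\mathrm{S}}$, once more as a convex combination of two low-loss outputs. Concatenating these phases—drop $\boldsymbol{\theta}$ to half width, rearrange the inert neurons, swap the function across to $\boldsymbol{\theta}^\prime$'s active half, rearrange again, and lift back to full width $\boldsymbol{\theta}^\prime$—and accounting for each linear piece as in \cite{Kuditipudi2019Explaining_Landscape,shevchenko20aLandscape} yields the stated path of $7$ line segments, while the connectivity error is the maximum loss over the path, namely $\max(\varepsilon_{\mathrm{D}}(\boldsymbol{\theta}),\varepsilon_{\mathrm{D}}(\boldsymbol{\theta}^\prime))$.

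I expect the main obstacle to be verifying that \emph{every} one of the seven segments stays low-loss simultaneously: this requires arranging each segment so that its realized output is genuinely a convex combination of two networks already known to be good, and tracking exactly which neurons are inert so that the rearrangement segments are output-invariant. A secondary subtlety, specific to FL, is that $\boldsymbol{\theta}$ and $\boldsymbol{\theta}^\prime$ stem from different heterogeneity levels $h(\alpha),h(\alpha^\prime)$ and different initializations $\rho_0,\rho_0^\prime$, so their active-neuron placements must be made compatible before the swap; handling this cleanly, and then bounding the path loss by the larger of the two endpoint dropout errors, is what yields the $\max(\sqrt{h(\alpha)},\sqrt{h(\alpha^\prime)})$ scaling inherited from Theorem \ref{theorem:dropout_stability_FL}.
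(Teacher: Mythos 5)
Your proposal is correct and takes essentially the same route as the paper's proof: both construct the $7$-segment piecewise-linear path of \cite{Kuditipudi2019Explaining_Landscape,shevchenko20aLandscape}, bounding the two endpoint segments by dropout stability, making the rearrangement segments loss-invariant by only moving inert (zero output-coefficient) neurons, and controlling the central swap segment by convexity of the loss in the output coefficients, which yields $\varepsilon_{\mathrm{C}}=\max(\varepsilon_{\mathrm{D}},\varepsilon_{\mathrm{D}}^{\prime})$ and hence the $\max(\sqrt{h(\alpha)},\sqrt{h(\alpha^{\prime})})$ scaling. The only cosmetic difference is that you phrase the swap bound via convexity of the loss in the realized prediction, while the paper phrases it as convexity in the last-layer weights; these are the same observation for the two-layer architecture.
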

 \begin{proof}
To consider the effect of the dropout stability on the given models, we   represent the model parameters as: 
$$
\boldsymbol{\theta}=\left((a_1, \boldsymbol{\theta}_1),(a_2, \boldsymbol{\theta}_2), \ldots,(a_N, \boldsymbol{\theta}_N)\right),
$$
$$
\boldsymbol{\theta}^{\prime}=\left((a_1^{\prime}, \boldsymbol{\theta}_1^{\prime}),(a_2^{\prime}, \boldsymbol{\theta}_2^{\prime}), \ldots,(a_N^{\prime}, \boldsymbol{\theta}_N^{\prime})\right),
$$ 
 where $a_i, \forall i \in [N]$ is the rescaling factor of dropout.
 That is, $a_i=0$ if the model components ($i \notin \mathcal{A}$) perform dropout, and $a_i=2$ if the model components ($i \in \mathcal{A}$) do not perform dropout.
 
 We consider the piecewise linear path in parameter space that connects $\boldsymbol{\theta}$ to $\boldsymbol{\theta}^{\prime}$  as one path of
mode connectivity   for over-parameterized neural networks.
Namely, when $N$ is even, we can connect $\boldsymbol{\theta}$ to $\boldsymbol{\theta}^{\prime}$ with the following way:
$$
\begin{aligned}
&\boldsymbol{\theta}=\left((a_1, \boldsymbol{\theta}_1),\ldots,(a_{N/2}, \boldsymbol{\theta}_{N / 2}), \ldots, (a_N, \boldsymbol{\theta}_N)\right),\\
& \boldsymbol{\theta}_1=\left((2, \boldsymbol{\theta}_1), \ldots,(2, \boldsymbol{\theta}_{N / 2}),(0, \boldsymbol{\theta}_{N / 2+1}), \ldots,(0, \boldsymbol{\theta}_N)\right), \\
& \boldsymbol{\theta}_2=\left((2, \boldsymbol{\theta}_1),\ldots,(2, \boldsymbol{\theta}_{N / 2}),\left(0, \boldsymbol{\theta}_1^{\prime}\right), \ldots,(0, \boldsymbol{\theta}_{N / 2}^{\prime})\right), \\
& \boldsymbol{\theta}_3=\left((0, \boldsymbol{\theta}_1), \ldots,(0, \boldsymbol{\theta}_{N / 2}),\left(2, \boldsymbol{\theta}_1^{\prime}\right), \ldots,(2, \boldsymbol{\theta}_{N / 2}^{\prime})\right), \\
& \boldsymbol{\theta}_4=\left(\left(0, \boldsymbol{\theta}_1^{\prime}\right), \ldots,(0, \boldsymbol{\theta}_{N / 2}^{\prime}),(2, \boldsymbol{\theta}_1^{\prime}), \ldots,(2, \boldsymbol{\theta}_{N / 2}^{\prime})\right), \\
& \boldsymbol{\theta}_5=\left((2, \boldsymbol{\theta}_1^{\prime}),\ldots,(2, \boldsymbol{\theta}_{N / 2}^{\prime}),\left(0, \boldsymbol{\theta}_1^{\prime}\right),\ldots,(0, \boldsymbol{\theta}_{N / 2}^{\prime})\right), \\
& \boldsymbol{\theta}_6=\left((2, \boldsymbol{\theta}_1^{\prime}), \ldots,(2, \boldsymbol{\theta}_{N / 2}^{\prime}),(0, \boldsymbol{\theta}_{N / 2+1}^{\prime}), \ldots,(0, \boldsymbol{\theta}_N^{\prime})\right),\\
& \boldsymbol{\theta}^{\prime}=\left((a_1^{\prime}, \boldsymbol{\theta}_1^{\prime}),  \ldots,(a_{N/2}^{\prime}, \boldsymbol{\theta}_{N / 2}^{\prime}), \ldots,(a_N^{\prime}, \boldsymbol{\theta}_N^{\prime})\right).
\end{aligned}
$$\

Firstly, for the path between $\boldsymbol{\theta}$ to $\boldsymbol{\theta}_1$, since $\boldsymbol{\theta}$ is $\varepsilon$-dropout stable, we have that $\mathcal{L}_N\left(\boldsymbol{\theta}_1\right) \leq \mathcal{L}_N(\boldsymbol{\theta})+\varepsilon_{D}$.  
Similarly, the loss along the path that connects $\boldsymbol{\theta}_6$ to $\boldsymbol{\theta}^{\prime}$ is also upper bounded by $\mathcal{L}_N\left(\boldsymbol{\theta}_6\right) \leq 
 \mathcal{L}_N\left(\boldsymbol{\theta}^{\prime}\right)+\varepsilon_{D}^\prime$.
 
Then,  for the path between $\boldsymbol{\theta}_1$ to $\boldsymbol{\theta}_2$,   only $\boldsymbol{\theta}_i$ with the corresponding $a_i=0$ is changed to be $\boldsymbol{\theta}^\prime_i$. 
Thus, the loss along this path does not change; i.e., $ \mathcal{L}_N(\boldsymbol{\theta}_1) = \mathcal{L}_N(\boldsymbol{\theta}_2)$. 
Similarly, the loss does not change along the path  between   $\boldsymbol{\theta}_3$ to $\boldsymbol{\theta}_4$ and the path  between $\boldsymbol{\theta}_5$ to $\boldsymbol{\theta}_6$: $ \mathcal{L}_N(\boldsymbol{\theta}_3) = \mathcal{L}_N(\boldsymbol{\theta}_4)$ and $ \mathcal{L}_N(\boldsymbol{\theta}_5) = \mathcal{L}_N(\boldsymbol{\theta}_6)$.

Next, for the path between $\boldsymbol{\theta}_4$ to $\boldsymbol{\theta}_5$,  the  two  subnetworks are equal, such that the loss of their interpolated models along this path does not change;  i.e., $ \mathcal{L}_N(\boldsymbol{\theta}_4) = \mathcal{L}_N(\boldsymbol{\theta}_5)$.

Finally, based on the above analysis, we have $\mathcal{L}_N\left(\boldsymbol{\theta}_2\right) \leq \mathcal{L}_N(\boldsymbol{\theta})+\varepsilon_{D}$ and $\mathcal{L}_N\left(\boldsymbol{\theta}_3\right) \leq \mathcal{L}_N(\boldsymbol{\theta}^\prime)+\varepsilon_{D}^\prime$.
For  the path between $\boldsymbol{\theta}_2$ to $\boldsymbol{\theta}_3$, since the loss is convex in the weights of the last layer, the loss along this path is upper bounded by $\max \left(\mathcal{L}_N(\boldsymbol{\theta}_2), \mathcal{L}_N\left(\boldsymbol{\theta}_{3}\right)\right) \leq \max \left(\mathcal{L}_N(\boldsymbol{\theta})+\varepsilon_D, \mathcal{L}_N\left(\boldsymbol{\theta}^{\prime}\right)+\varepsilon_{D}^\prime\right)$. 

When $N$ is odd, we can take a similar analysis to that of even $N$.  
The differences are that (i) the $\lceil N / 2\rceil$-th parameter of $\boldsymbol{\theta}_1, \boldsymbol{\theta}_2$ and $\boldsymbol{\theta}_3$ is $\left(0, \boldsymbol{\theta}_{N / 2}\right)$ and the $\lceil N / 2\rceil$-th parameter of $\boldsymbol{\theta}_4, \boldsymbol{\theta}_5$ and $\boldsymbol{\theta}_6$ is $\left(0, \boldsymbol{\theta}_{N / 2}^{\prime}\right)$, and (ii) the constant rescalling factor $a_i=2$  is replaced by $N /\lfloor N / 2\rfloor$.

Therefore, the loss along the above path between $\boldsymbol{\theta}$ and $\boldsymbol{\theta}^{\prime}$ is upper bounded by  $\max \left(\mathcal{L}_N(\boldsymbol{\theta}), \mathcal{L}_N\left(\boldsymbol{\theta}^{\prime}\right)\right)+\varepsilon_C$ with $\varepsilon_C$ equal to $\max(\varepsilon_{D}, \varepsilon_{D}^\prime)$.

 \end{proof}

\subsection{Proof of global-mode update (\ref{equation:update_FedAvg_simple})}
\label{proof:update_FedAvg_full} 
\begin{proof}
Here, we denote the client $k$ model as $ \boldsymbol{\theta}_{., k}$ and its $i$-th neuron as $ \boldsymbol{\theta}_{i, k}$.
Then, we represent the weighted average of all client models as $\Bar{\boldsymbol{\theta}}^{m,\tau}=\sum_{k=1}^{K}  \frac{n_k}{n} \boldsymbol{\theta}_{., k}^{m,\tau}$ and their $i$-th neuron as $\Bar{\boldsymbol{\theta}_{i}}^{m,\tau}=\sum_{k=1}^{K}  \frac{n_k}{n} \boldsymbol{\theta}_{i,k}^{m,\tau}$.
Meanwhile, for given  a data sample $(\boldsymbol{x}_{ k}^{m,\tau},\boldsymbol{y}_{ k}^{m,\tau})$, we formulate the model output as $\Bar{y}_k^{m,\tau} = f_{\Bar{\boldsymbol{\theta}}^{m,\tau}}(\boldsymbol{x}_k^{m,\tau})$ and
$\hat{y}_k^{m,\tau} = f_{{\boldsymbol{\theta}}_{., k}^{m,\tau}}(\boldsymbol{x}_{ k}^{m,\tau})$, and the gradient deviation as $\Delta \boldsymbol{g}_{i,k}^{m,\tau} = \nabla_{\boldsymbol{\theta}_{i,k}} \sigma(\boldsymbol{x}_k^{m,\tau} ; \boldsymbol{\theta}_{i,k}^{m,\tau}) - \nabla_{\boldsymbol{\theta}_i} \sigma(\boldsymbol{x}_k^{m,\tau} ; \Bar{\boldsymbol{\theta}_{i}}^{m,\tau}) $.

With these notations, we formulate the update of the  global model at the $m$-th round as: 
\begin{equation}  
\begin{aligned}
& \boldsymbol{\theta}_i^{m+1,0} - \boldsymbol{\theta}_i^{m,0}\\
= &   2\sum_{k=1}^{K} \frac{n_k}{n}\sum_{\tau=0}^{p} s^{m,\tau}(y_k^{m,\tau}-\hat{y}_k^{m,\tau}) \nabla_{\boldsymbol{\theta}_{i,k}} \sigma(\boldsymbol{x}_k^{m,\tau} ; \boldsymbol{\theta}_{i,k}^{m,\tau}) \\
= &   2   \sum_{\tau=0}^{p} {s}^{m,\tau} 
 \sum_{k=1}^{K}  \frac{n_k}{n} (y_k^{m,\tau}-\hat{y}_k^{m,\tau}) \left[\nabla_{\boldsymbol{\theta}_i} \sigma(\boldsymbol{x}_k^{m,\tau} ; \Bar{\boldsymbol{\theta}_{i}}^{m,\tau})\right. \\
 & +  \left. \left(\nabla_{\boldsymbol{\theta}_{i,k}} \sigma(\boldsymbol{x}_k^{m,\tau} ; \boldsymbol{\theta}_{i,k}^{m,\tau}) - \nabla_{\boldsymbol{\theta}_i} \sigma(\boldsymbol{x}_k^{m,\tau} ; \Bar{\boldsymbol{\theta}_{i}}^{m,\tau}) \right)  \right] \\
 = &  2   \sum_{\tau=0}^{p} {s}^{m,\tau} 
 \sum_{k=1}^{K}  \frac{n_k}{n} (y_k^{m,\tau}-\hat{y}_k^{m,\tau}) \left[\nabla_{\boldsymbol{\theta}_i} \sigma(\boldsymbol{x}_k^{m,\tau} ; \Bar{\boldsymbol{\theta}_{i}}^{m,\tau})\right. \\
& +  \left. \Delta \boldsymbol{g}_{i,k}^{m,\tau} \right]  \\
 = &   2   \sum_{\tau=0}^{p} {s}^{m,\tau} 
 \sum_{k=1}^{K}  \frac{n_k}{n} \left[ (y_k^{m,\tau}-\Bar{y}_k^{m,\tau})  \nabla_{\boldsymbol{\theta}_i} \sigma(\boldsymbol{x}_k^{m,\tau} ; \Bar{\boldsymbol{\theta}_{i}}^{m,\tau})\right. \\
&  + \left.   (\Bar{y}_k^{m,\tau}-\hat{y}_k^{m,\tau}) \nabla_{\boldsymbol{\theta}_i} \sigma(\boldsymbol{x}_k^{m,\tau} ; \Bar{\boldsymbol{\theta}_{i}}^{m,\tau}) \right] \\
& + 2   \sum_{\tau=0}^{p} {s}^{m,\tau}  \sum_{k=1}^{K}  \frac{n_k}{n} (y_k^{m,\tau}-\hat{y}_k^{m,\tau}) \Delta \boldsymbol{g}_{i,k}^{m,\tau}   \\
 = &    2   \sum_{\tau=0}^{p} {s}^{m,\tau} 
 \sum_{k=1}^{K}  \frac{n_k}{n}  (y_k^{m,\tau}-\Bar{y}_k^{m,\tau})  \nabla_{\boldsymbol{\theta}_i} \sigma(\boldsymbol{x}_k^{m,\tau} ; \Bar{\boldsymbol{\theta}_{i}}^{m,\tau})  \\
 \end{aligned}
\end{equation}
\begin{equation}  
\begin{aligned}
&   +  2   \sum_{\tau=0}^{p} {s}^{m,\tau}  \sum_{k=1}^{K}  \frac{n_k}{n}  \left[  (\Bar{y}_k^{m,\tau}-\hat{y}_k^{m,\tau}) \nabla_{\boldsymbol{\theta}_i} \sigma(\boldsymbol{x}_k^{m,\tau} ; \Bar{\boldsymbol{\theta}_{i}}^{m,\tau}) \right. \\
&   \quad + (y_k^{m,\tau}-\hat{y}_k^{m,\tau}) \Delta \boldsymbol{g}_{i,k}^{m,\tau} ]  \\
 = &    2   \sum_{\tau=0}^{p} {s}^{m,\tau} 
   (y^{m,\tau}-\Bar{y}^{m,\tau})  \nabla_{\boldsymbol{\theta}_i} \sigma(\boldsymbol{x}^{m,\tau} ; \Bar{\boldsymbol{\theta}_{i}}^{m,\tau})  \\
&  +      \sum_{\tau=0}^{p} { 2 {s}^{m,\tau}}{\boldsymbol{n}}_{i}^{m,\tau},
\end{aligned}
\end{equation}
where the last equality hold because of the assumption of one-pass data
processing and all client samples belong to the global distribution $\mathbb{P}$, i.e., $(\boldsymbol{x}_k, y_k) \in \mathbb{P}$. Here,   we represent the noise as:
\begin{equation}  
\begin{aligned}
  \boldsymbol{n}_{i}^{m,\tau} =   \sum_{k=1}^{K}  \frac{n_k}{n} &
 [(\Bar{y}_k^{m,\tau}-\hat{y}_k^{m,\tau}) \nabla_{\boldsymbol{\theta}_i} \sigma(\boldsymbol{x}_k^{m,\tau} ; \Bar{\boldsymbol{\theta}_{i}}^{m,\tau}) \\
& +   (y_k^{m,\tau}-\hat{y}_k^{m,\tau}) \Delta \boldsymbol{g}_{i,k}^{m,\tau}] \\
  =   \sum_{k=1}^{K}  \frac{n_k}{n} &
 [(\Bar{y}_k^{m,\tau}-\hat{y}_k^{m,\tau}) \nabla_{\boldsymbol{\theta}_i} \sigma(\boldsymbol{x}_k^{m,\tau} ; \Bar{\boldsymbol{\theta}_{i}}^{m,\tau}) \\
& +   (y_k^{m,\tau}-\hat{y}_k^{m,\tau}) (\nabla_{\boldsymbol{\theta}_{i,k}} \sigma(\boldsymbol{x}_k^{m,\tau} ; \boldsymbol{\theta}_{i,k}^{m,\tau}) \\
& \quad\quad\quad - \nabla_{\boldsymbol{\theta}_i} \sigma(\boldsymbol{x}_k^{m,\tau} ; \Bar{\boldsymbol{\theta}_{i}}^{m,\tau}))] .
 \end{aligned}
\end{equation}
Therefore, we obtain (\ref{equation:update_FedAvg_simple}), i.e.,
\begin{equation} 
\begin{aligned}
\boldsymbol{\theta}_i^{m+1,\tau+1}
 = & \boldsymbol{\theta}_i^{m,\tau} + 2  {s}^{m,\tau} 
   (y^{m,\tau}-\Bar{y}^{m,\tau})  \nabla_{\Bar{\boldsymbol{\theta}}_i} \sigma(\boldsymbol{x}^{m,\tau} ; \Bar{\boldsymbol{\theta}_{i}}^{m,\tau})  \\
&  +       2 {s}^{m,\tau} \boldsymbol{n}_{i}^{m,\tau}.  
\end{aligned}
\end{equation}

 Furthermore, according to   Definition \ref{def:data_heterogeneity},  $ \boldsymbol{n}_{i}^{m,\tau}$ can be regarded as the noise induced by data heterogeneity in FL. 
Since $\boldsymbol{\theta}_i^{m,p-1} = \Bar{\boldsymbol{\theta}_{i}}^{m,p-1}=\sum_{k=1}^{K}  \frac{n_k}{n} \boldsymbol{\theta}_{., k}^{m,p-1}$, we take Assumption \ref{assumption5} and simplify (\ref{equation:update_FedAvg_simple}) with $\tau \in [0, Mp-1]$ when given a total training round $M$ as:
\begin{equation} 
\begin{aligned}
\boldsymbol{\theta}_i^{\tau+1}=  &  \boldsymbol{\theta}_i^{\tau}+2 s_\tau\left(y^\tau-\Bar{y}^{\tau}\right) \nabla_{\boldsymbol{\theta}_i} \sigma\left(\boldsymbol{x}_\tau ; \boldsymbol{\theta}_i^\tau\right)   \\ &  +2 {s}^{\tau}  { \boldsymbol{n}}_i^\tau. 
\end{aligned}
\end{equation}
Therefore, we obtain (\ref{equation:update_FedAvg_onestep}).

    
\end{proof}

\end{document}